\def\eqref#1{equation~\ref{#1}}
\def\1{\bm{1}}
\DeclareMathAlphabet{\mathsfit}{\encodingdefault}{\sfdefault}{m}{sl}
\SetMathAlphabet{\mathsfit}{bold}{\encodingdefault}{\sfdefault}{bx}{n}
\newcommand{\KL}{D_{\mathrm{KL}}}
\definecolor{darkblue}{rgb}{0.0,0.0,0.65}
\definecolor{darkred}{rgb}{0.68,0.05,0.0}
\definecolor{myblue}{rgb}{0.8, 0.85, 1} 
\newtheorem{prop}{Proposition}
\newcommand{\cmark}{\ding{51}}%
\newcommand{\xmark}{\ding{55}}%
\newcommand{\name}{InfoCORE}
\newcommand{\rephrase}[1]{\textcolor{black}{#1}}
\newcommand{\rebuttal}[1]{\textcolor{black}{#1}}
\newcommand{\rebuttalcolor}{black}
\newcommand{\arxiv}[1]{\textcolor{black}{#1}}
\newcommand{\notation}[1]{\textcolor{black}{#1}}
\title{Removing Biases from Molecular Representations via Information Maximization}
\author{
Chenyu Wang\thanks{correspondence to \texttt{wangchy@mit.edu}}$\;\,^{1,2}$, Sharut Gupta$^1$, Caroline Uhler$^{2,3}$, Tommi Jaakkola$^1$\\
$^1$Computer Science and Artificial Intelligence Laboratory, Massachusetts Institute of Technology\\
$^2$Eric and Wendy Schmidt Center, Broad Institute of MIT and Harvard\\
$^3$Laboratory for Information and Decision Systems, Massachusetts
Institute of Technology
}
\preprint
\begin{document}

\maketitle

\begin{abstract}
High-throughput drug screening -- using cell imaging or gene expression measurements as readouts of drug effect -- is a critical tool in biotechnology to assess and understand the relationship between the chemical structure and biological activity of a drug. Since large-scale screens have to be divided into multiple experiments, a key difficulty is dealing with batch effects, which can introduce systematic errors and non-biological associations in the data.
We propose \name, an \textbf{Info}rmation maximization approach for \textbf{CO}nfounder \textbf{RE}moval,
to effectively deal with batch effects and obtain refined molecular representations. 
\name \ establishes a variational lower bound on the conditional mutual information of the latent representations given a batch identifier. It adaptively reweighs samples to equalize their implied batch distribution. 
Extensive experiments on drug screening data reveal \name's superior performance in a multitude of tasks including molecular property prediction and molecule-phenotype retrieval. 
Additionally, we show results for how \name \ offers a versatile framework and resolves general distribution shifts and issues of data fairness by minimizing correlation with spurious features or removing sensitive attributes. \arxiv{The code is available at} \url{https://github.com/uhlerlab/InfoCORE}.
\end{abstract}

\section{Introduction}
Representation learning~\citep{bengio2013representation} has become pivotal in drug discovery~\citep{wu2018moleculenet} and understanding biological systems~\citep{yang2021multi}. It serves as a pillar for recognizing drug mechanisms, predicting a drug’s activity and toxicity, and identifying disease-associated chemical structures. A central challenge in this context is to accurately capture the nuanced relationship between the chemical structure of a small molecule and its biological or physical attributes. Most molecular representation learning methods only encode a molecule's chemical identity and hence provide unimodal representations~\citep{wang2022molecular,xu2021self}. A limitation of such techniques is that molecules with similar structures can have very different effects in the cellular context.

High-content drug screens have been developed that output post-perturbation (i.e., after the application of a drug) cellular images and gene expression~\citep{chandrasekaran2021image}. Such datasets provide a unique opportunity to improve our understanding of the biological effect of a compound \rebuttal{and help refine the representation of molecules.} Given the huge chemical space of over $10^{60}$ molecules of possible interest for drug discovery~\citep{reymond2010chemical}, the full space cannot be explored experimentally. Thus, computational methods are needed to obtain biologically informed molecular representations that can be generalized to untested molecules.

\par Recent works in this direction~\citep{nguyen2023molecule,zheng2022cross} focused on training models to map 2D molecular structures to high-content cell microscopy images~\citep{bray2017dataset,chandrasekaran2023jump} through multimodal contrastive learning~\citep{radford2021learning}. Generalizing across molecular structures has been challenging because of the variability in the training data as drug responses vary under different conditions. In particular, batch effects are pervasive, stemming from the fact that large-scale drug screens have to be divided into multiple experiments over many years. \rebuttal{Batch effects refer to non-biological associations introduced through the measurement process. }They can systematically distort the data and make it challenging to isolate the true biological signal.

\par While various statistical approaches have been proposed to correct for batch effects~\citep{johnson2007adjusting,korsunsky2019fast}, these generally consist of a preprocessing step that is independent of the downstream training task. We provide a more effective batch correction method by directly integrating it with the learning algorithm, framing the task similar to removing the impact from a sensitive attribute (batch number). While various techniques have been developed for this problem, including in computer vision, they do not directly address the challenges in drug screening. For example,
when observations of batch-drug combinations are limited in number and coverage, using contrastive samples with the same value of the sensitive attribute as in \citet{ma2021conditional} can lead to poor generalization. \citet{zhang2022fairness} rely on models to generate unbiased image training data, but generative models for high-content screening data are in their infancy \citep{yang2021mol2image}. 

\begin{figure}
    \centering
    \includegraphics[width=.9\textwidth]{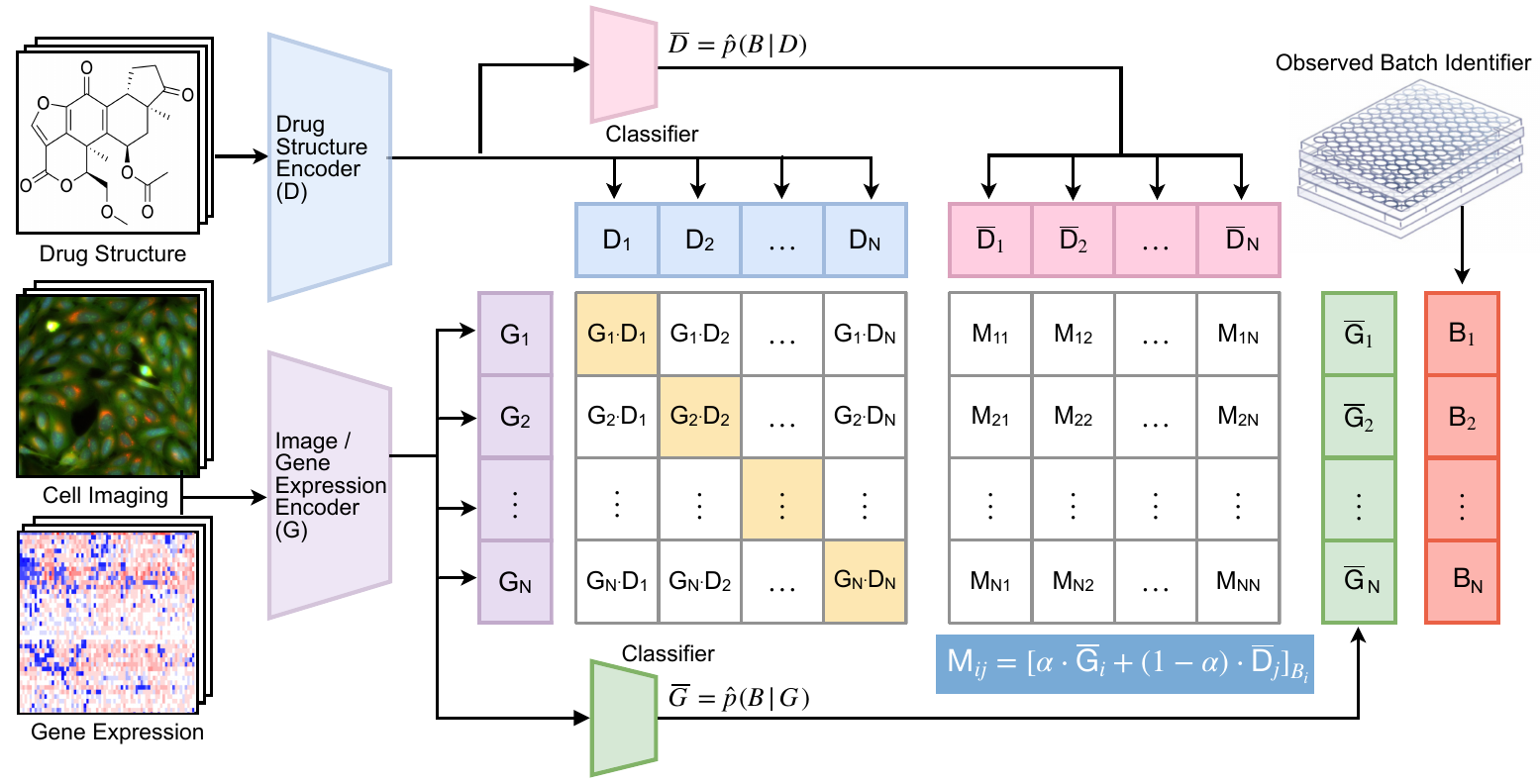}
    \caption{The model takes drug structure and screening data (gene expression or cellular imaging) as input and encodes it into latent representations $D_i$ and $G_i$. Two batch classifiers are trained to obtain the posterior batch distribution $\Bar{G_i}=\hat{p}(B|G_i)$ and $\Bar{D_i}=\hat{p}(B|D_i)$. The two encoders are jointly trained to predict the correct pairings of molecule-phenotype training samples. The loss $L$ is a function of two matrices: the pairwise cosine similarity of the representations  (left) and the weight matrix for each pair, where the weight $M_{ij}$ is the weighted (with scaler $\alpha$) average posterior probability of the latent representations being in batch $B_i$, i.e.~$\Bar{G_i}[B_i]$ and $\Bar{D_j}[B_i]$ (right). \rebuttal{Sample pairs with more similar posterior batch distributions are emphasized more in $L$.} The figure illustrates the term in the loss $L$ using $G$ as an anchor; using $D$ as an anchor can be done analogously.}
    \label{fig.main}
\end{figure}
 
\par  In this paper, we introduce a novel method, \name, designed to mitigate confounding factors in multimodal contrastive learning. While we describe our method in the context of batch effect removal for molecular representation learning, we also show its effectiveness as a general-purpose framework, such as for the removal of sensitive information to enhance fairness. \name \ formulates an intuitive and easy-to-optimize objective based on a variational lower bound on conditional mutual information. In essence, sample pairs with more similar batch distributions are emphasized more in the InfoNCE loss function \citep{oord2018representation}. This weighting scheme enables the model to adapt its learning strategy for each sample based on the batch-related information present in the latent representations. The proposed framework is illustrated in Figure~\ref{fig.main}.

\par We conduct experiments with two common readouts of high-content drug screens: LINCS gene expression profiles~\citep{subramanian2017next} and cell imaging profiles~\citep{bray2017dataset}. We show that \name \ consistently outperforms the baseline models across a range of downstream tasks, including molecule-phenotype retrieval and molecular property prediction. 
Furthermore, our empirical evaluations demonstrate \name's broad applicability well beyond drug screening. In particular, we demonstrate that \name \ obtains improved representations with respect to various fairness measures over existing baselines across multiple datasets, including UCI Adult~\citep{asuncion2007uci}, Law School~\citep{wightman1998lsac}, and Compas~\citep{angwin2022machine}.

To summarize, the main contributions of our work are: 
\begin{itemize}[itemsep=0.3pt,topsep=0.5pt, leftmargin=0.3cm]
\item We propose \name, a framework for multimodal molecular representation learning, capable of integrating diverse high-content drug screens with chemical structures.
\item Theoretically, we show that \name \ maximizes the variational lower bound on the conditional mutual information of the representation given the batch identifier. It empirically outperforms various baselines on tasks such as molecular property prediction and molecule-phenotype retrieval.
\item The information maximization principle of \name \ extends beyond drug discovery. We empirically demonstrate its efficacy in eliminating sensitive information for representation fairness.
\end{itemize}

\section{Method}
In this section, we introduce \name, provide the intuition for how it counteracts biases from irrelevant attributes, and derive the corresponding training objective. In Section~\ref{sec.cmi}, we describe the underlying graphical model as well as the main learning objective, which is based on conditional mutual information. In Section~\ref{sec.infonce}, we establish a tractable lower bound for this objective, which we show can be interpreted as the InfoNCE loss~\citep{oord2018representation}, but with unequally weighted negative samples.
See Appendix~\ref{app.infonce} for a short review and additional backgrounds on InfoNCE.

\subsection{Conditional Mutual Information Maximization to Remove Biases from Irrelevant Attributes.}
\label{sec.cmi}
Figure~\ref{fig:model} depicts the graphical model for both the observations and the learned representations. In this model, $(X_d,X_g,X_b)$ represent observed variables -- $X_d$ signifies the molecular structure of a drug, $X_g$ indicates the shift in gene expression or other phenotype induced by applying the drug, and $X_b$ represents an irrelevant attribute such as the experimental batch in the context of molecular representation learning. Illustrated by dashed lines in Figure~\ref{fig:model}, $(Z_d, Z_g)$ are representations generated from the trained encoders: $Z_d = \textrm{Enc}_d(X_d;\theta_d)$, $Z_g = \textrm{Enc}_g(X_g;\theta_g)$, where $\textrm{Enc}_d(.;\theta_d)$ is the encoder of the drug structure and $\textrm{Enc}_g(.;\theta_g)$ is the encoder for the screening readout.

\begin{wrapfigure}{r}{0.37\textwidth}
    \centering
    \vspace{-0.4cm}
    \includegraphics[width=0.35\textwidth]{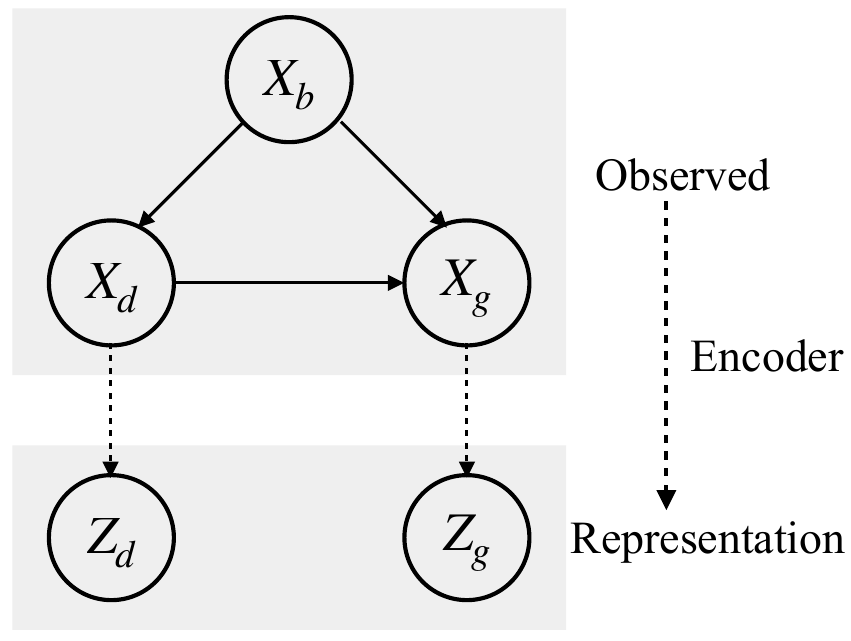}
    \caption{Graphical Model}
    \label{fig:model}
 \vspace{-0.3cm}
\end{wrapfigure}

The graphical model illustrates that $X_b$ is a confounding factor affecting the learned representations. The observed phenotype $X_g$ is influenced by the drug's effect, derived from its molecular structure, but also by external non-biological factors (i.e.~batch effects) $X_b$. Since batch assignment in biological experiments is not always (often) fully randomized, the batch number can affect which drug is applied, explaining the arrow from $X_b$ to $X_d$.  The presence of a backdoor path from $X_d$ to $X_g$ through $X_b$ means that naive estimates of the effect of $X_d$ on $X_g$ will be biased. 

We adopt a conditional mutual information objective to address this challenge, conditioned on the irrelevant attributes $X_b$. This results in learned \emph{latent} representations $Z_d, Z_g$, where variations associated with $X_b$ are effectively excluded. 
\rebuttal{Intuitively, this is because the features tied to $X_b$ do not improve the conditional mutual information objective, and because the latent representation has a finite dimension (implying a representational resource constraint).} This claim is further justified in \citet{ma2021conditional} and Proposition 2 of \citet{robinson2021can}. More precisely, based on the graphical model given above, we specify a conditional mutual information criterion for learning latent representations of $X_d$ and $X_g$ that do not contain the effect of $X_b$. This criterion satisfies the following bound\footnote{The inequality is based on the Markov relationship in the graphical model and data processing inequality.}:
\begin{equation}
\label{eq.obj}
    \max_{\theta_d, \theta_g} \frac{1}{2} \left(I(Z_d;X_g|X_b; \theta_d)+I(Z_g;X_d|X_b; \theta_g)\right) \geq \max_{\theta_d, \theta_g} I(Z_d;Z_g|X_b; \theta_d, \theta_g).
\end{equation}
As noted earlier, this objective function emphasizes drug's bioactivity by focusing on shared features of the two modalities that are unrelated to batch. 

\subsection{Reweighted InfoNCE Objective as Tractable Lower Bound.}
\label{sec.infonce}
Direct estimation of conditional mutual information is computationally intractable for high-dimensional continuous random variables. The InfoNCE objective in contrastive learning~\citep{poole2019variational, oord2018representation, tian2020contrastive} has been introduced as a tractable alternative and was extended to the multimodal case in CLIP~\citep{radford2021learning}. \rebuttal{It optimizes the representation to bring each ``anchor" data point close to its paired ``positive" and away from ``negative" samples in the latent space (details in Appendix \ref{app.infonce}).} It was further extended to the conditional case by~\citet{ma2021conditional}, where the relationship between conditional mutual information and conditional contrastive learning (CCL) was built. In CCL, the negative samples are drawn from the conditional marginal distributions of the representations $z_d$ and $z_g$, $p(z_d|x_b)$ and $p(z_g|x_b)$, conditioned on the anchor's batch number $x_b$. The downside is that when there are limited observations for a particular value of $x_b$ (limited number of drugs), the small number of possible negative samples will lead to poor generalization to new molecular structures.

 \begin{wraptable}{r}{0.5\textwidth}
\vspace{-3pt}
\renewcommand{\arraystretch}{1.1}
\setlength{\tabcolsep}{3.9pt}
\caption{Comparison of various methods such as CCL and CLIP with \name}
\label{table.checkmarker}
\centering
\begin{tabular}{l|ccc}
\hline
Method                         & CLIP & CCL & Ours \\ \hline
Debiasing         & \xmark  &  \cmark  &    \cmark  \\
Large Supply of Negatives       &  \cmark    &  \xmark    &    \cmark  \\ \hline
\end{tabular}
\end{wraptable}

To address this challenge, we introduce \name, which reweighs the negative samples in the InfoNCE objective differentially based on the posterior batch distribution. This acts as a practical lower bound for the conditional mutual information $I(Z_d;Z_g|X_b)$ and dynamically adjusts the weight for each anchor-negative pair. A comparison of various methods is provided in Table~\ref{table.checkmarker}.

\subsubsection{Single-sample Lower Bound of Conditional Mutual Information.}
\label{sec.sgbound}
InfoNCE \citep{oord2018representation} is one of the most popular objectives in contrastive learning. \citet{poole2019variational} showed that the InfoNCE objective is a lower bound on mutual information. We obtain the bound in Proposition~\ref{prop.boundsg} by extending the variational lower bound from the energy-based variational family in~\citet{poole2019variational} to conditional mutual information. The proof is given in Appendix \ref{app.proof0}.

\begin{prop}
\label{prop.boundsg} 
    Given random variables $Z_g,Z_d$ as representations of two data modalities and $X_b$ as the irrelevant attribute, the conditional mutual information $I(Z_d;Z_g|X_b)$ between representations conditioned on the irrelevant attribute has the following lower bound:
    {\small
    \begin{equation}
    \label{eq.sg}
        \begin{split}
        I(Z_d;Z_g|X_b) \geq \mathbb{E}_{p(z_d,z_g,x_b)} \left[ h(z_d,z_g,x_b) \right] - e^{-1} \mathbb{E}_{p(z_d)}\mathbb{E}_{p(z_g,x_b)} \left[ e^{h(z_d,z_g,x_b)} \right] - I(Z_d;X_b),
        \end{split}
    \end{equation}
    }
    which results from using an energy-based variational family $q(z_g,x_b|z_d)$ to approximate the distribution $p(z_g,x_b|z_d)$: {\small$$q(z_g,x_b|z_d) = \frac{p(z_g,x_b)}{Z(z_d)} e^{h(z_d,z_g,x_b)}, \text{ where } Z(z_d) = \mathbb{E}_{p(z_g,x_b)}\left[ e^{h(z_d,z_g,x_b)} \right] \text{ is the partition function},$$}
    with equality holding when the critic\footnote{We follow the terminology in \citet{poole2019variational} to refer to $h(z_g,z_g,x_b)$ as critic.} $h$ satisfies $h^*(z_d,z_g,x_b)=1+\log \frac{p(z_g,x_b|z_d)}{p(z_g,x_b)}$.
\end{prop}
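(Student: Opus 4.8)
The plan is to reduce this conditional statement to the standard (unconditional) energy-based variational bound of Nguyen--Wainwright--Jordan used by \citet{poole2019variational}, by treating the pair $(Z_g, X_b)$ as a single target variable. The first step I would take is to invoke the chain rule for mutual information,
\[
I(Z_d; Z_g, X_b) = I(Z_d; X_b) + I(Z_d; Z_g \mid X_b),
\]
equivalently $I(Z_d; Z_g \mid X_b) = I(Z_d; (Z_g, X_b)) - I(Z_d; X_b)$. This decomposition immediately isolates the correction term $-I(Z_d; X_b)$ appearing on the right-hand side of the claim, and it leaves me with the concrete subgoal of lower-bounding the \emph{unconditional} mutual information $I(Z_d; (Z_g, X_b))$, to which the machinery of \citet{poole2019variational} applies directly.

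Next I would establish the Barber--Agakov-style bound for $I(Z_d; (Z_g, X_b))$. For any conditional density $q(z_g, x_b \mid z_d)$, non-negativity of the KL divergence $\KL\!\left(p(z_g,x_b\mid z_d)\,\|\,q(z_g,x_b\mid z_d)\right)\ge 0$ yields
\[
I(Z_d;(Z_g,X_b)) \ge \mathbb{E}_{p(z_d,z_g,x_b)}\!\left[\log\frac{q(z_g,x_b\mid z_d)}{p(z_g,x_b)}\right],
\]
with equality iff $q(\cdot\mid z_d)=p(\cdot\mid z_d)$. I would then substitute the energy-based family $q(z_g,x_b\mid z_d)=\frac{p(z_g,x_b)}{Z(z_d)}e^{h(z_d,z_g,x_b)}$, so that $\log\frac{q}{p(z_g,x_b)} = h(z_d,z_g,x_b)-\log Z(z_d)$, reducing the right-hand side to $\mathbb{E}_{p(z_d,z_g,x_b)}[h] - \mathbb{E}_{p(z_d)}[\log Z(z_d)]$. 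The one remaining inequality linearizes the intractable log-partition term via the tangent-line bound $\log t \le e^{-1}t$ (valid for all $t>0$, tight at $t=e$): since $Z(z_d)=\mathbb{E}_{p(z_g,x_b)}[e^{h}]$, this gives $\mathbb{E}_{p(z_d)}[\log Z(z_d)] \le e^{-1}\,\mathbb{E}_{p(z_d)}\mathbb{E}_{p(z_g,x_b)}[e^{h}]$. Chaining these estimates with the chain-rule identity produces exactly the stated bound.

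Finally, for the equality condition I would verify that the proposed $h^*=1+\log\frac{p(z_g,x_b\mid z_d)}{p(z_g,x_b)}$ makes \emph{both} inequalities tight at once. Substituting gives $e^{h^*}=e\cdot\frac{p(z_g,x_b\mid z_d)}{p(z_g,x_b)}$, whence $Z(z_d)=\mathbb{E}_{p(z_g,x_b)}[e^{h^*}]=e$ identically (so the tangent-line step is tight), and back-substituting shows $q(z_g,x_b\mid z_d)=p(z_g,x_b\mid z_d)$ (so the Barber--Agakov step is tight). I expect the only genuinely non-mechanical part to be the opening chain-rule reduction: recognizing that conditioning on $X_b$ can be absorbed by bundling it into the target variable $(Z_g,X_b)$ and then subtracting $I(Z_d;X_b)$ is what lets the unconditional variational argument go through verbatim. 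The remaining subtlety is bookkeeping in the equality analysis, where two tightness conditions must hold simultaneously rather than independently.
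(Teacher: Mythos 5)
Your proposal is correct and is essentially the paper's own argument: the paper's opening identity $I(Z_d;Z_g|X_b)=\mathbb{E}_{p(z_d,z_g,x_b)}\bigl[\log\frac{p(x_b)\,p(z_g,x_b|z_d)}{p(x_b|z_d)\,p(z_g,x_b)}\bigr]$ is exactly your chain-rule decomposition $I(Z_d;(Z_g,X_b))-I(Z_d;X_b)$ written as a single expectation, and from there both proofs proceed identically (KL non-negativity with the same energy-based family, the tangent bound $\log t\le e^{-1}t$, and the same two simultaneous tightness conditions yielding $h^*=1+\log\frac{p(z_g,x_b|z_d)}{p(z_g,x_b)}$).
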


This bound provides a tractable estimator -- where the first two terms can be estimated given samples from the joint distribution and product marginal distribution, and $I(Z_d;X_b)$ can be optimized via adversarial training as shown by \citet{guo2023estimating}. However, as explained in \citet{poole2019variational}, 
this bound may exhibit high variance due to its reliance on the upper bounds of the log partition function, i.e., $\log Z(z_d) \leq e^{-1} Z(z_d)$, which introduces high variance in its sample approximations.

\subsubsection{\name \ as a Multi-sample Lower Bound of Conditional Mutual Information.}
\label{sec.mainprop}
To reduce the variance of the single sample lower bound discussed above, we develop a method that makes use of additional samples from the data. An intuitive approach is to extend the unconditional mutual information proposed in \citet{oord2018representation} and \citet{poole2019variational}. However, applying this strategy directly in our use case would mean an adversarial optimization of $I(Z_d; X_b)$ (see Equation~\ref{eq.sg}), which is disadvantageous from a computational perspective. Instead, we propose to decompose the distribution $p(z_g,x_b|z_d)$ into the product of a conditional distribution $p(z_g|z_d)$ -- which is independent of batch $x_b$ -- and the posterior batch distribution based on the latent representation $p(x_b|z_d,z_g)$. We show that by separately approximating these distributions, maximizing the conditional mutual information maintains connections to the widely-used InfoNCE objective.

In the following propositions, we build on the decomposition of $p(z_g,x_b|z_d)$ to derive a multi-sample lower bound of the conditional mutual information objective shown in Equation \ref{eq.boundmulti}. The proof is given in Appendix \ref{app.proof1}. \rebuttal{Variables with superscript 1 (i.e. {\small$Z_d^1, Z_g^1, X_b^1$}) denote the corresponding variables for the anchor-positive pair. Those with superscript other than 1 (i.e. {\small$Z_d^{2:K}, Z_g^{2:K}$}) denote the negative samples in the multi-sample case; $K$ is the number of negative samples.} \arxiv{A lookup table of the definition of all variables is provided in Appendix \ref{app.vardef}.}

\begin{prop}
\label{prop.bound1}
    Given samples {\small$(z_d^1,z_g^1,x_b^1)$} drawn from the joint distribution {\small$(Z_d^1,Z_g^1,X_b^1) \sim p(z_d,z_g,x_b)$} and {\small$z_d^{2:K}$} drawn i.i.d.~from the marginal distribution {\small$Z_d^i \sim p(z_d)$} for $i=2,...,K$, then the conditional mutual information {\small$I(Z_d^1;Z_g^1|X_b^1)$} has the following lower bound:
    {\small
    \begin{equation}\label{eq.boundmulti}I(Z_d^1;Z_g^1|X_b^1) \geq -L_{\text{CLIP}} - L_{\text{CLF}} + C - H(X_b^1),\end{equation}}
    \vspace{-9pt}
    {\small
    \begin{equation}
    \begin{split}
        \text{\normalsize where} \ \ L_{\text{CLIP}} =& -\frac{1}{2} \left[ \mathbb{E}_{p(z_d^1,z_g^1,x_b^1)p(z_d^{2:K})} \left [ \log \frac{e^{f(z_g^1,z_d^1)}}{\frac{1}{K} \sum_{i=1}^K e^{f(z_g^1,z_d^i)} \cdot \notation{\hat{p}_g(x_b^1|z_g^1,z_d^i)}} \right ] \right.\nonumber\\ &+ \left. \mathbb{E}_{p(z_d^1,z_g^1,x_b^1)p(z_g^{2:K})} \left [ \log \frac{e^{f(z_g^1,z_d^1)}}{\frac{1}{K} \sum_{i=1}^K e^{f(z_g^i,z_d^1)} \cdot \notation{\hat{p}_d(x_b^1|z_g^i,z_d^1)}} \right ] \right],\nonumber\\
        L_{\text{CLF}} =& \frac{1}{2} \left[ \mathbb{E}_{p(z_d^1)} \left [ \displaystyle \KL \left (p\left(x_b^1|z_d^1\right) \Vert \hat{p}\left(x_b^1|z_d^1\right) \right ) \right] + \mathbb{E}_{p(z_g^1)} \left [ \displaystyle \KL \left( p\left(x_b^1|z_g^1\right) \Vert \hat{p}\left(x_b^1|z_g^1\right) \right) \right] \right],\nonumber\\
        C =& \frac{1}{2} \ \mathbb{E}_{p(z_d^1,z_g^1,x_b^1)} \left [ \log\frac{\notation{\hat{p}_g(x_b^1|z_g^1,z_d^1) \cdot \hat{p}_d(x_b^1|z_g^1,z_d^1)}}{\hat{p}(x_b^1|z_g^1) \cdot \hat{p}(x_b^1|z_d^1)}\right].
    \end{split}
    \end{equation}}
    
    The lower bound holds for any choice of critic {\small$f(z_g,z_d)$} and variational distribution \notation{{\small$\hat{p}_g(x_b|z_g,z_d)$}}, \notation{{\small$\hat{p}_d(x_b|z_g,z_d)$}}, with equality holding when {\small$f^*(z_d,z_g)=\log \frac{p(z_g|z_d)}{p(z_g)}$} and \notation{{\small$\hat{p}_g^*(x_b|z_g,z_d) = \hat{p}_d^*(x_b|z_g,z_d) = p(x_b|z_g,z_d)$}}.\footnote{We note that {\small$\hat{p}(x_b|z_g)$} and {\small$\hat{p}(x_b|z_d)$} cancel out in {\small$-L_{\text{CLF}}+C$}, but we spell out these terms for ease of estimating {\small$C$} later.} 
\end{prop}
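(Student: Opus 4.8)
The plan is to reduce the conditional mutual information to two pieces that can each be controlled by an established variational bound, and then recombine. Since the target is symmetric, I would first write $I(Z_d^1;Z_g^1|X_b^1)=\frac{1}{2}\big(I(Z_d;Z_g|X_b)+I(Z_g;Z_d|X_b)\big)$ and derive a lower bound for one orientation (say, with $z_d^{2:K}$ as the resampled negatives), obtaining the other by the relabeling $d\leftrightarrow g$. For a single orientation I would invoke the chain rule $I(Z_d;Z_g|X_b)=I(Z_d;(Z_g,X_b))-I(Z_d;X_b)$, which is the information-theoretic shadow of the factorization $p(z_g,x_b|z_d)=p(z_g|z_d)\,p(x_b|z_d,z_g)$ advertised before the statement, and bound the two resulting terms separately.

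For the positive term $I(Z_d;(Z_g,X_b))$ I would apply the multi-sample (InfoNCE/CPC) lower bound of \citet{poole2019variational,oord2018representation} — the multi-sample analogue of the energy-based bound of Proposition~\ref{prop.boundsg} — treating $(Z_g,X_b)$ as a single anchor variable, $Z_d$ as the variable to be classified, and drawing the negatives $z_d^{2:K}$ from the marginal $p(z_d)$. The essential move is to feed in the \emph{factorized} critic $g(z_d,z_g,x_b)=f(z_g,z_d)+\log \hat{p}_g(x_b|z_g,z_d)$, so that $e^{g}=e^{f(z_g^1,z_d^i)}\,\hat{p}_g(x_b^1|z_g^1,z_d^i)$ reproduces exactly the reweighted denominator $\frac{1}{K}\sum_{i}e^{f(z_g^1,z_d^i)}\hat{p}_g(x_b^1|z_g^1,z_d^i)$ of $L_{\text{CLIP}}$. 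Splitting the positive factor out of the numerator then yields the $L_{\text{CLIP}}$ ratio (with a plain $e^{f(z_g^1,z_d^1)}$ numerator) plus a leftover additive term $\mathbb{E}[\log \hat{p}_g(x_b^1|z_g^1,z_d^1)]$, which I would reserve for the constant $C$.

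For the nuisance term I would rewrite $I(Z_d;X_b)=H(X_b)-H(X_b|Z_d)$ and replace the conditional entropy by its variational cross-entropy surrogate through the classifier $\hat{p}(x_b|z_d)$, via $H(X_b|Z_d)=-\mathbb{E}[\log \hat{p}(x_b|z_d)]-\mathbb{E}_{p(z_d)}\KL\big(p(x_b|z_d)\Vert\hat{p}(x_b|z_d)\big)$. This produces $-H(X_b^1)$, one half of the classifier KL of $L_{\text{CLF}}$, and a piece $-\mathbb{E}[\log\hat{p}(x_b^1|z_d^1)]$ that I would also route into $C$. Assembling the two orientations with weight $\tfrac12$, the InfoNCE ratios combine into $-L_{\text{CLIP}}$, the two KL terms into $-L_{\text{CLF}}$, the leftover $\log\hat{p}_g,\log\hat{p}_d,\log\hat{p}(x_b|z_d),\log\hat{p}(x_b|z_g)$ terms into $C$, and the entropies into $-H(X_b^1)$, giving the claimed inequality. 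For the equality clause I would track the tightness condition of each ingredient so that every inequality becomes an equality: the energy/InfoNCE bound is tight at $f^*(z_d,z_g)=\log\frac{p(z_g|z_d)}{p(z_g)}$, and the variational terms vanish exactly when the classifiers match the true posteriors, i.e. $\hat{p}_g^*=\hat{p}_d^*=p(x_b|z_g,z_d)$.

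The main obstacle I anticipate is the bookkeeping in this recombination step: one must verify that the stray numerator factor $\log\hat{p}_g$ produced by the factorized critic and the $-\log\hat{p}(x_b|z_d)$ produced by the entropy surrogate pair up correctly across the two symmetric orientations to form precisely $C$, and in particular that the auxiliary marginals $\hat{p}(x_b|z_g)$ and $\hat{p}(x_b|z_d)$ cancel between $-L_{\text{CLF}}$ and $C$ as noted in the footnote. A secondary care point is confirming that the multi-sample bound stays valid with negatives drawn from the unconditional marginal $p(z_d)$ rather than the batch-conditional $p(z_d|x_b)$ — which is exactly what lets this derivation sidestep the adversarial estimation of $I(Z_d;X_b)$ that burdened the single-sample bound.
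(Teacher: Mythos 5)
Your proposal is correct and is in substance the paper's own proof, just modularized: your chain-rule split $I(Z_d;Z_g|X_b)=I(Z_d;(Z_g,X_b))-I(Z_d;X_b)$ is precisely the paper's opening factorization $\frac{p(x_b)\,p(z_d,z_g,x_b)}{p(z_d,x_b)\,p(z_g,x_b)}=\frac{p(x_b)}{p(x_b|z_d)}\cdot\frac{p(z_g,x_b|z_d)}{p(z_g,x_b)}$; your factorized critic $e^{f(z_g,z_d)}\,\hat{p}_g(x_b|z_g,z_d)$ plugged into the cited multi-sample InfoNCE bound is exactly the energy-based variational family $q(z_g^1,x_b^1|z_d^{1:K})\propto p(z_g^1,x_b^1)\,e^{f(z_d^1,z_g^1)}\,\hat{p}_g(x_b^1|z_g^1,z_d^1)/a(z_g^1,x_b^1;z_d^{1:K})$ that the paper constructs and bounds by hand; and your cross-entropy/KL identity for $I(Z_d;X_b)$ is the paper's substitution of $\hat{p}(x_b^1|z_d^1)$ for $p(x_b^1|z_d^1)$ with the $\KL$ correction, after which the symmetrization and the cancellation of $\hat{p}(x_b|z_d),\hat{p}(x_b|z_g)$ work out exactly as you anticipate. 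The only real difference is organizational: the paper re-derives the multi-sample bound inline (KL non-negativity, $\log Z\le e^{-1}Z$, and the symmetry argument showing $e^{-1}\mathbb{E}\left[Z(z_d^{1:K})\right]=1$) rather than invoking it as a black box, which also lets it read off the stated optimality conditions for $f^*$, $\hat{p}_g^*$, and $\hat{p}_d^*$ directly.
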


We note that $L_{\text{CLIP}}$ resembles the symmetrical InfoNCE loss in multimodal contrastive learning \citep{radford2021learning}, but with the negative samples reweighted according to the posterior batch distributions; i.e., the negative samples $z_d^i$ that are more likely to be in the anchor's batch $x_b^1$ (having higher value \notation{$\hat{p}_g(x_b^1|z_g^1,z_d^i)$}) are weighted more. $L_{\text{CLF}}$ denotes the average classification loss achieved when training a classifier to predict the batch number from a given latent representation ($z_g$ or $z_d$).

\noindent\textbf{Posterior Batch Distribution Estimation.} Estimating the lower bound in Proposition \ref{prop.bound1} requires estimating the reweighting factors \notation{$\hat{p}_g(x_b^1|z_g^1, z_d^i)$} and \notation{$\hat{p}_d(x_b^1|z_d^1, z_g^i)$}. This presents several challenges, especially for negative samples when $i \neq 1$: 1) The corresponding empirical observations are absent, leading to a lack of data; 2) Since most inputs $z_d$ and $z_g$ are unpaired, directly training a classifier with the paired input {\small$\texttt{concat}[z_g^1,z_d^1]$} could result in poor out-of-distribution generalization. Additionally, this approach would require the computationally intensive step of rerunning the classifier for each pair separately.

Given that $p(x_b^1|z_g^1,z_d^i)$ represents the posterior batch distribution informed by both latent representations, both modalities offer valuable but possibly overlapping information about the batch number $x_b^1$. Hence, we can estimate $p(x_b^1|z_g^1,z_d^i)$ using the weighted arithmetic average of the posteriors given each individual latent, $\hat{p}(x_b^1|z_g^1)$ and $\hat{p}(x_b^1|z_d^i)$\rebuttal{, which measures the remaining batch-related information in each latent representation}. This offers an intuitive and computationally inexpensive estimate for $p(x_b^1|z_g^1,z_d^i)$:
{\small$$\notation{\hat{p}_g(x_b^1|z_g^1,z_d^i)} = \alpha \cdot \hat{p}(x_b^1|z_g^1) + (1-\alpha) \cdot \hat{p}(x_b^1|z_d^i), \ \notation{\hat{p}_d(x_b^1|z_g^i,z_d^1)} = \alpha \cdot \hat{p}(x_b^1|z_d^1) + (1-\alpha) \cdot \hat{p}(x_b^1|z_g^i).$$}

Based on this, the denominator in $L_{\text{CLIP}}$ can be viewed as a combination of uniformly weighted negative samples {\small$\alpha \cdot \hat{p}(x_b^1|z_g^1) \cdot \frac{1}{K} \sum_{i=1}^K e^{f(z_g^1,z_d^i)}$} and {\small$\alpha \cdot \hat{p}(x_b^1|z_d^1) \cdot \frac{1}{K} \sum_{i=1}^K e^{f(z_d^1,z_g^i)}$}, as well as confounder biased weighted negative samples {\small$(1-\alpha)  \frac{1}{K} \sum_{i=1}^K \hat{p}(x_b^1|z_d^i)\cdot e^{f(z_g^1,z_d^i)}$} and {\small$(1-\alpha) \frac{1}{K} \sum_{i=1}^K \hat{p}(x_b^1|z_g^i)\cdot e^{f(z_d^1,z_g^i)}$}. $\alpha$ serves as a tuning parameter, determining the degree of reliance on the posterior estimated from the common anchor ($z^1_d$ or $z^1_g$). It presents a tradeoff between correcting for irrelevant attributes and enhancing model generalization by incorporating a larger supply of negative samples. When $\alpha=1$, $L_{\text{CLIP}}$ collapses to the unweighted InfoNCE loss in CLIP~\citep{radford2021learning}. The following proposition shows that under these assumptions, $C$ can be lower bounded by 0, thus simplifying the lower bound further. The proof is given in Appendix \ref{app.proof2}.
\begin{prop}
    \label{prop.boundc}
    When estimating {\small$\notation{\hat{p}_g(x_b^1|z_g^1,z_d^i)}$} as the weighted average of {\small$\hat{p}(x_b^1|z_g^1)$} and {\small$\hat{p}(x_b^1|z_d^i)$}, and analogously for {\small$\notation{\hat{p}_d(x_b^1|z_g^i,z_d^1)}$}, the term {\small$C$} defined in Proposition~\ref{prop.bound1} is lower bounded by zero.
\end{prop}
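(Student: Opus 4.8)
The plan is to reduce the claim to a pointwise inequality on the integrand of $C$ and then recognize it as a perfect-square identity. First I would substitute the proposed estimators, evaluated on the paired sample (i.e.\ at $i=1$), into the expression for $C$ from Proposition~\ref{prop.bound1}. Writing the shorthand $a := \hat{p}(x_b^1|z_g^1)$ and $b := \hat{p}(x_b^1|z_d^1)$ for the two single-modality posteriors (which are strictly positive softmax outputs, so the ratio below is well defined), the estimators become $\hat{p}_g(x_b^1|z_g^1,z_d^1) = \alpha a + (1-\alpha) b$ and $\hat{p}_d(x_b^1|z_g^1,z_d^1) = \alpha b + (1-\alpha) a$, while the denominator of the log-ratio in $C$ is exactly $ab$. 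Hence the integrand of $C$ is $\tfrac{1}{2}\log\frac{(\alpha a + (1-\alpha) b)(\alpha b + (1-\alpha) a)}{ab}$.

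Next I would show that this integrand is nonnegative pointwise, for which it suffices that the argument of the logarithm is at least $1$, i.e.\ that the numerator dominates $ab$. Expanding the product and collecting terms gives
\begin{equation}
(\alpha a + (1-\alpha) b)(\alpha b + (1-\alpha) a) - ab = \alpha(1-\alpha)(a-b)^2,
\end{equation}
using $\alpha^2 + (1-\alpha)^2 = 1 - 2\alpha(1-\alpha)$. Since $\alpha \in [0,1]$ is a convex-combination weight, the factor $\alpha(1-\alpha)$ is nonnegative, and $(a-b)^2 \ge 0$ trivially, so the right-hand side is nonnegative. Therefore the log-argument is $\ge 1$ and the integrand is $\ge 0$ for every realization of $(z_d^1,z_g^1,x_b^1)$.

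Finally, by monotonicity of expectation, integrating a pointwise-nonnegative quantity against $p(z_d^1,z_g^1,x_b^1)$ yields $C \ge 0$, which is the claim. I do not expect a substantive obstacle here: the only structural input is that the two estimators are convex combinations of the same pair $a,b$ with complementary weights $\alpha$ and $1-\alpha$, and the whole argument hinges on spotting that this forces the product of the two combinations to exceed $ab$ by the perfect square $\alpha(1-\alpha)(a-b)^2$. The one hypothesis worth stating explicitly is $\alpha \in [0,1]$; outside this range the sign of $\alpha(1-\alpha)$ could flip and the bound would fail, so I would flag that the convex-combination interpretation of $\alpha$ is precisely what the proof requires.
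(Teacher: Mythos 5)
Your proof is correct, and it establishes the same pointwise reduction as the paper (show the log-argument is at least $1$ for every realization, then conclude by monotonicity of expectation), but the key lemma differs. The paper applies the weighted AM-GM inequality to each factor separately, $\alpha a + (1-\alpha) b \geq a^{\alpha}b^{1-\alpha}$ and $\alpha b + (1-\alpha) a \geq b^{\alpha}a^{1-\alpha}$, and multiplies the two bounds so that the exponents cancel to give exactly $ab$. You instead expand the product and obtain the exact identity
\begin{equation*}
(\alpha a + (1-\alpha) b)(\alpha b + (1-\alpha) a) = ab + \alpha(1-\alpha)(a-b)^2,
\end{equation*}
which is a strictly more elementary argument and carries slightly more information: it quantifies the slack in the bound (zero precisely when $a=b$ or $\alpha\in\{0,1\}$, i.e.\ when the two single-modality posteriors agree or the combination degenerates), which also makes transparent your correct observation that $\alpha\in[0,1]$ is exactly the hypothesis needed. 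What the paper's AM-GM route buys in exchange is generality: the same inequality, applied in the other direction to normalization sums, lets the paper extend Proposition~\ref{prop.boundc} in the same appendix to the case where the posteriors are combined by a weighted \emph{geometric} rather than arithmetic average, a variant your perfect-square identity does not cover as directly.
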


Based on Proposition~\ref{prop.boundc}, the lower bound in Proposition~\ref{prop.bound1} can be further reduced to {\small$I(Z_d^1;Z_g^1|X_b^1) \geq -L_{\text{CLIP}} - L_{\text{CLF}} - H(X_b^1)$}. Since {\small$H(X_b^1)$} is a constant, maximization of the conditional mutual information lower bound gives rise to the minimization of our \name\, loss function:
\begin{equation}
\label{eq.loss}
    L_{\text{\name}} = L_{\text{CLIP}} + L_{\text{CLF}},
\end{equation}
where $L_{\text{CLIP}}$ and $L_{\text{CLF}}$ are defined as in Proposition~\ref{prop.bound1}.

Estimating the batch distribution as opposed to directly utilizing the observed values of $X_b$ for the negative samples as in \citet{ma2021conditional} and \citet{tsai2021conditional} has advantages: Since experimental batches may contain molecules that share scaffolds, while others may possess randomly assigned ones, the batch confounding effect can vary from batch to batch. \name \  can deal with the fact that positive and negative samples may be affected differently by the batch confounder. Moreover, since batch distribution is estimated using the latent representations and not the original data, once the batch effect has been mitigated, \name \ implicitly adjusts during training and ceases to reweight the negative samples.

\subsubsection{Computational Considerations.}
We iteratively optimize the loss function by updating the encoders based on $L_{\text{CLIP}}$ and then the classifiers based on $L_{\text{CLF}}$. Note that although we freeze the classifiers when optimizing $L_{\text{CLIP}}$, $\hat{p}(x_b^1|z_d^i)$ and $\hat{p}(x_b^1|z_g^i)$ depend on the encoder parameters, which introduces competing objectives. Precisely, the gradient of $L_{\text{CLIP}}$ w.r.t.~the representations $z_d^i$ and $z_g^i$ can be decomposed into two components: a standard component as in CLIP~\citep{radford2021learning}, involving {\small$ \partial f(z_d^1,z_g^i)/\partial z_g^i$} and {\small$\partial f(z_g^1,z_d^i)/\partial z_d^i$}, and a competing component involving {\small$ \partial \hat{p}(x_b^1|z_g^i)/\partial z_g^i$}, and {\small$\partial \hat{p}(x_b^1|z_d^i)/\partial z_d^i$} (see details in Appendix~\ref{app.grad}). Note that both gradient components drive the representations towards reduced batch effect: In the standard component, samples with similar batch distributions are up-weighted and thus the latent variables are driven to be less informative of batch number. In the other component, when {\small$i=1$}, the derivatives {\small$ \partial \hat{p}(x_b^1|z_g^1)/\partial z_g^1$} and {\small$ \partial \hat{p}(x_b^1|z_d^1)/\partial z_d^1$} update the encoder to make $\hat{p}(x_b^1|z_d^1)$ and $\hat{p}(x_b^1|z_g^1)$ smaller, making the latent representations less informative of batch number. When $i>1$, according to the confounder-biased weighted part of $L_{\text{CLIP}}$'s denominator, i.e. {\small$(1-\alpha)  \frac{1}{K} \sum_{i=1}^K \hat{p}(x_b^1|z_d^i)\cdot e^{f(z_g^1,z_d^i)}$}, we can infer that $\hat{p}(x^1_b|z_d^i)$ is adjusted similar in response to $e^{f(z_g^1,z_d^i)}$ as $e^{f(z_g^1,z_d^i)}$ is adjusted on the basis of $\hat{p}(x^1_b|z_d^i)$. Therefore, the gradient updating schema of $\hat{p}(x^1_b|z_d^i)$ in terms of the underlying representation $z_d^i$ is, in broad terms, consistent with that of $e^{f(z_g^1,z_d^i)}$. This motivates our approach to use either gradient or stop gradient on $\hat{p}$ and treat them as constant weights. In practice, we use a hyperparameter $\lambda$ to control the gradient update schedule.

\section{Experiments}
In this section, we present comprehensive experiments demonstrating the effectiveness of \name \ to remove biases in representation learning. Across experiments, we compare \name \ with the unconditional multi-modal contrastive learning method CLIP \citep{radford2021learning, nguyen2023molecule} and the recent conditional contrastive learning method CCL \citep{ma2021conditional}. \arxiv{The code is available at} \url{https://github.com/uhlerlab/InfoCORE}.

\subsection{Simulation Study}
\label{sec.exp.simulation}
\begin{figure}
    \centering
\includegraphics[width=0.98\textwidth]{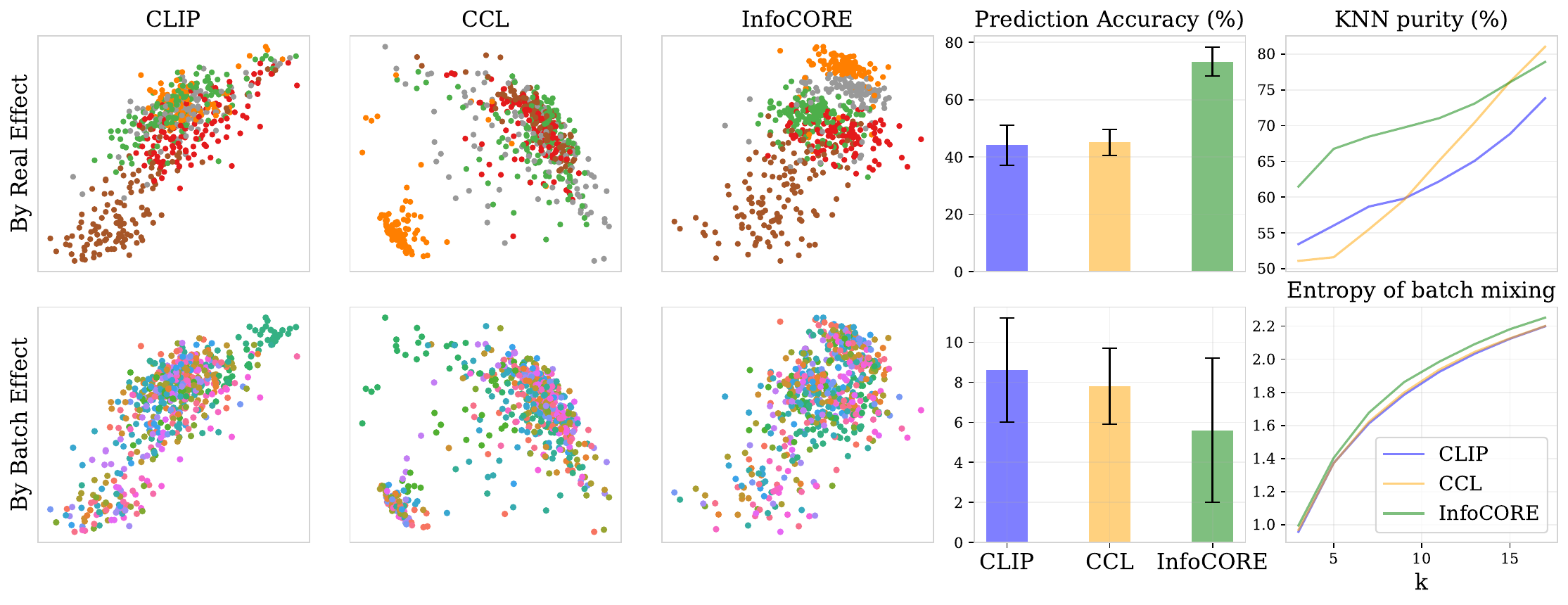}
    \vspace{-2pt}
    \caption{\rebuttal{2D representation visualizations and quantitative metrics for each method: top row uses real effect category for coloring and calculating prediction accuracy; bottom row uses batch identifier. The rightmost column shows KNN purity and entropy of batch mixing across different k values.}}
    \label{fig.simu}
\end{figure}

To offer insights into \name's capability to discern a drug's biological effect from noisy data influenced by batch confounders, we conduct a low-dimensional simulation experiment that replicates our data generation process. Here, each drug, associated with a real effect from one of five categories, is randomly assigned to one of 25 batches, with a total of 50 drugs per batch. The observational data $X_g$, corresponding to screen output, and $X_d$, corresponding to drug structure, are 10-dimensional vectors. These vectors are generated by a randomized function of the input which concatenates biological effect, batch effect, and Gaussian noise. The dataset is split, with half utilized for training and the remaining half \rephrase{held out for evaluation}. Additional details pertaining to data generation and experiments can be found in Appendix~\ref{app.exp}.

The 2-dimensional latent representation for the drug screens, $Z_g$, learned by different methods for the held-out data \rebuttal{and several quantitative metrics \citep{xu2021probabilistic}} are depicted in Figure~\ref{fig.simu}; the corresponding figure for drug structure \rebuttal{and a detailed explanation of these metrics} are provided in Appendix~\ref{app.results}. \rebuttal{An ideal representation should distinctly separate real effect categories without being influenced by batch identifier. This is characterized by high accuracy in predicting real effects, low accuracy for batch identifiers, and elevated values of KNN purity and entropy of batch mixing.} CLIP is affected by batch confounders and has difficulty identifying real effect categories, while CCL's embeddings, despite fusing batch identifiers, falter in distinguishing biological effects due to limited negative samples. Conversely, \name \ achieves a balance between debiasing and expressivity, offering superior representations that discern real effects and effectively mitigate batch effects.

\subsection{Representation Learning of Small Molecules.}
\label{sec.exp.drug} 
\textbf{Dataset Description.} We use two high-content drug screening datasets, L1000 gene expression profiles~\citep{subramanian2017next} (GE) and cell imaging profiles obtained from the Cell Painting assay~\citep{bray2017dataset} (CP). In GE, we select data from the nine core cell lines, resulting in 17,753 drugs and 82,914 drug-cell line pairs. In CP, 30,204 small molecules are screened in one cell line (U2OS). We use the hand-crafted image features obtained by the popular CellProfiler method \citep{mcquin2018cellprofiler}. The chemical structures are featurized using Mol2vec~\citep{jaeger2018mol2vec}.

\textbf{Molecule-Phenotype Retrieval.} The drug repurposing or drug discovery task can be viewed as follows: identify molecules (e.g., from a drug repurposing library) that are most likely to induce a given desired phenotypic change (i.e., gene expression change from diseased towards normal). To mimic this process, \rebuttal{similar to the setting in \citet{zheng2022cross},} we randomly split both datasets into a training set consisting of 80\% of the molecules and hold out the remaining molecules for testing. We use top N accuracy (N=1, 5, 10) as an evaluation metric and analyze two retrieval libraries: (a) all molecules in the held-out set (\textit{whole}), and (b) held-out set molecules that are in the same experimental batch as the retrieving target (\textit{batch}). \rebuttal{Accuracies over both libraries as a whole reflect the model’s ability of molecule-phenotype retrieval for drug discovery and drug repurposing.}

\begin{table}[!h]
    \centering
    \caption{Retrieving accuracy of different methods for gene expression and cell imaging screens.}
    \label{table.acc}
    \small
    \setlength{\tabcolsep}{4pt} 
    \begin{adjustbox}{max width=\textwidth}
        \begin{tabular}{l cccc cccc cccc cccc}
            \toprule
            Dataset & \multicolumn{6}{c}{Gene Expression (GE)} & \multicolumn{6}{c}{Cell Painting (CP)} \\
            \cmidrule(lr){2-7} \cmidrule(lr){8-13}
            Retrieval Library & \multicolumn{3}{c}{\textit{whole}} & \multicolumn{3}{c}{\textit{batch}} & \multicolumn{3}{c}{\textit{whole}} & \multicolumn{3}{c}{\textit{batch}} \\
            \cmidrule(lr){2-4} \cmidrule(lr){5-7} \cmidrule(lr){8-10} \cmidrule(lr){11-13}
            Top N Acc (\%) & N=1 & N=5 & N=10 & N=1 & N=5 & N=10 & N=1 & N=5 & N=10 & N=1 & N=5 & N=10 \\
            \midrule
            Random & 0.03 & 0.13 & 0.27 & 1.58 & 7.90 & 15.81 & 0.02 & 0.08 & 0.17 & 1.59 & 7.97 & 15.94 \\
            CLIP & \rebuttal{5.96} & \rebuttal{18.59} & \rebuttal{27.17} & \rebuttal{12.23} & \rebuttal{30.29} & \rebuttal{42.63} & \rebuttal{\textbf{7.23}} & \rebuttal{\textbf{20.95}} & \rebuttal{\textbf{28.89}} & \rebuttal{13.20} & \rebuttal{37.78} & \rebuttal{52.72} \\
            CCL & \arxiv{1.93} & \arxiv{5.85} & \arxiv{8.37} & \arxiv{12.76} & \arxiv{32.39} & \arxiv{45.77} & \rebuttal{1.31} & \rebuttal{4.93} & \rebuttal{7.38} & \rebuttal{13.20} & \rebuttal{37.99} & \rebuttal{53.13} \\
            \name & \rebuttal{\textbf{6.39}} & \rebuttal{\textbf{18.99}} & \rebuttal{\textbf{27.18}} & \rebuttal{\textbf{14.03}} & \rebuttal{\textbf{33.63}} & \rebuttal{\textbf{46.78}} & \rebuttal{6.93} & \rebuttal{20.65} & \rebuttal{28.22} & \rebuttal{\textbf{13.26}} & \rebuttal{\textbf{38.50}} & \rebuttal{\textbf{53.13}} \\
            \bottomrule
        \end{tabular}
    \end{adjustbox}
\end{table}

Since CLIP and CCL are not specifically designed for cross-modal molecular representation learning, 
\rebuttal{we made several modifications to the vanilla algorithms for this task;} see details in Appendix~\ref{app.adapt}. \arxiv{We use the MoCo~\citep{he2020momentum} framework for CLIP and \name, and the SimCLR~\citep{chen2020simple} framework for CCL (since MoCo requires individual queues for each conditioning variable and is not tractable for the drug screening datasets we experiment with, as discussed in Appendix \ref{app.adapt}).} As shown in Table~\ref{table.acc}, \rebuttal{\name \ is the only model that has strong performance in both libraries,} especially in GE, which agrees with the belief that gene expression data is more affected by batch effects. Batch-related features dominate the CLIP representation, leading to the model's poor performance in \textit{batch}. \rebuttal{For example, \name \ outperforms CLIP in GE over \textit{batch} with a 15\% increase in top 1 accuracy and 11\% in top 5 accuracy.} \rephrase{CCL's reliance on the few hundred negative samples within the same experimental batches hinders its out-of-batch generalization, leading to its poor performance in \textit{whole}.} \rebuttal{Standard deviations over 3 random seeds are provided in Appendix \ref{app.results}.} \arxiv{We also report the results of CLIP and \name \ using the SimCLR framework for better comparison with CCL in Appendix \ref{app.results}.}

\textbf{Transfer Learning for Property Prediction.} Our trained model can be fine-tuned for various downstream tasks. Given that \name \ is pre-trained using drug screening data that contains information on the biological effect of a drug, transfer learning is expected to do well on the prediction of bioactivity-related properties. We test this by analyzing the following classification and regression tasks: For classification, we select 7 bioactivity-related benchmarks from MoleculeNet \citep{wu2018moleculenet} and follow the standard scaffold splitting procedure as suggested by~\citet{hu2020open}. Since the regression tasks in MoleculeNet (i.e. ESOL, Lipo, FreeSolv) are not directly bioactivity-related, we use post-perturbation cell viability in the PRISM dataset~\citep{corsello2020discovering} for the regression task using the same scaffold splitting procedure. Further details are provided in Appendix~\ref{app.exp}. 

We analyze \name \ when pretraining using the two different drug screening datasets and compare the performance to CLIP and CCL. We report the performance of Mol2vec \citep{jaeger2018mol2vec} with randomly initialized MLP of the same neural network architecture as a baseline for no-pretraining. Mean AUC-ROC is reported for the classification tasks and $\text{R}^2$ for the regression tasks, together with standard deviations based on 3 random seeds. The results are shown in Table~\ref{table.ft}.  Compared to Mol2vec, the results clearly demonstrate the benefit of pretraining using drug screening data: significant gains are obtained in all properties except SIDER. Moreover, among the different pretraining strategies, \name \ is highly competitive, achieving the best performance in almost all properties when using GE, especially in ClinTox, HIV, and BACE, \rephrase{and most properties with CP.}

\begin{table}[!h]
    \centering
    \caption{Performance of different methods on molecular property prediction task.}
    \label{table.ft}
    \small
    \setlength{\tabcolsep}{4pt}
    \renewcommand{\arraystretch}{1.2}
    \begin{adjustbox}{max width=\textwidth}
        \begin{tabular}{l| l cccccccc c}
            \toprule
            \multicolumn{2}{c}{}  & \multicolumn{8}{c}{Classification (ROC-AUC \%) $\uparrow$} & \multicolumn{1}{c}{Reg ($\text{R}^2$ \%) $\uparrow$} \\
            \cmidrule(lr){3-10} \cmidrule(lr){11-11}
            \multicolumn{2}{c}{Datasets} & BBBP & BACE & ClinTox & Tox21 & ToxCast & SIDER & HIV & Avg. & PRISM \\
            \midrule
            & \# Molecules  & 2039 & 1513 & 1478 & 7831 & 8575 & 1427 & 41127 & - & 3172 \\
            & \# Tasks  & 1 & 1 & 2 & 12 & 617 & 27 & 1 & - & 5 \\
            \midrule
            & Mol2vec & 70.7(0.4) & 82.9(0.7) & 84.9(0.3) & 76.0(0.1) & 74.4(0.5) & 64.9(0.3) & 77.7(0.1) & 75.9 & 8.5(0.7) \\
            \midrule
            \multirow{3}{*}{GE} & CLIP & 73.5(0.4) & 86.1(0.4) & 89.6(2.1) & 77.3(0.0) & 75.7(0.6) & 63.7(0.6) & 77.7(0.6) & 77.6 & 13.9(0.4) \\
            & CCL & \arxiv{73.0(0.8)} & \arxiv{85.9(0.6)} & \arxiv{90.5(1.0)} & \arxiv{77.0(0.2)} & \arxiv{\textbf{75.8(0.2)}} & \arxiv{63.4(0.5)} & \arxiv{77.5(0.9)} & \arxiv{77.6} & \arxiv{\textbf{16.0(0.5)}} \\
            & \name & \textbf{73.5(0.3)} & \textbf{86.6(0.3)} & \textbf{91.9(1.9)} & \textbf{77.4(0.4)} & 75.7(0.2) & \textbf{64.8(0.6)} & \textbf{78.5(0.2)} & \textbf{78.3} & 14.8(0.1) \\
            \midrule
             & CLIP & 73.4(0.8) & \textbf{85.2(0.4)} & 87.3(0.1) & 76.4(0.1) & 76.7(0.1) & 64.8(0.6) & 78.2(0.4) & 77.4 & 16.2(0.2) \\
            CP & CCL & \rebuttal{73.7(0.5)} & \rebuttal{84.9(0.9)} & \rebuttal{87.7(1.8)} & \rebuttal{75.9(0.3)} & \rebuttal{75.7(0.4)} & \rebuttal{65.2(0.4)} & \rebuttal{\textbf{79.3(0.3)}} & \rebuttal{77.5} & \rebuttal{14.7(0.3)} \\
            & \name & \textbf{74.0(0.8)} & 85.0(0.2) & \textbf{89.3(0.5)} & \textbf{76.6(0.1)} & \textbf{76.9(0.1)} & \textbf{65.2(0.1)} & 78.7(0.1) & \textbf{78.0} & \textbf{16.2(0.3)} \\
            \bottomrule
        \end{tabular}
    \end{adjustbox}
\end{table}

\subsection{Representation Fairness.}
\label{sec.exp.fair}
\textbf{Dataset and Fairness Criteria.} In the following, we demonstrate \name's broad applicability by conducting experiments in representation fairness 
\begin{table}[!h]
    \centering
    \caption{Performance of various methods on representation fairness task.}
    \label{table.fair}
    \small
    \setlength{\tabcolsep}{4pt}
    \renewcommand{\arraystretch}{1.08}
    \begin{adjustbox}{max width=\textwidth}
        \begin{tabular}{l ccc ccc ccc}
            \toprule
            & \multicolumn{3}{c}{UCI Adult} & \multicolumn{3}{c}{Law School} & \multicolumn{3}{c}{Compas} \\
            \cmidrule(lr){2-4} \cmidrule(lr){5-7} \cmidrule(lr){8-10}
            Method & Acc$\uparrow$ & EO$\downarrow$ & EOPP$\downarrow$ & Acc$\uparrow$ & EO$\downarrow$ & EOPP$\downarrow$ & Acc$\uparrow$ & EO$\downarrow$ & EOPP$\downarrow$ \\
            \midrule
            CLIP & 85.1(0.1) & 20.7(1.8) & 15.2(1.7) & \textbf{83.1(0.2)} & 30.9(1.4) & 7.9(0.8) & \textbf{60.8(2.3)} & 18.4(2.4) & 11.7(1.9) \\
            CCL & 85.1(0.2) & 19.0(3.3) & 13.3(2.8) & 83.0(0.3) & 27.8(1.5) & 6.7(0.9) & 59.5(2.3) & 17.1(3.4) & 10.1(3.0) \\
            \name & \textbf{85.2(0.1)} & \textbf{14.9(1.1)} & \textbf{9.7(0.8)} & 82.7(0.4) & \textbf{25.4(3.6)} & \textbf{6.0(1.4)} & 60.1(2.1) & \textbf{15.3(2.5)} & \textbf{9.3(0.8)} \\
            \bottomrule
        \end{tabular}
    \end{adjustbox}
\end{table}
using three fairness datasets: UCI Adult~\citep{asuncion2007uci}, Law School~\citep{wightman1998lsac}, and Compas~\citep{angwin2022machine}. Race and gender are used as protected attributes, separating the data into four subgroups. 
Considering that minority groups often encounter data limitation issues, we subsample the training data to mimic an unbalanced population. To assess representation fairness, we follow the setup in \citet{lahoti2020fairness} and \citet{ma2021conditional} and analyze three common fairness criteria~\citep{feldman2015certifying, hardt2016equality}: equalized odds (EO), equality of opportunity (EOPP), and demographic parity (DP), with definitions in Appendix~\ref{app.fairmetric} and DP results in Appendix~\ref{app.results}. We also calculate prediction accuracy (Acc) to account for the utility-fairness trade-off in the representations~\citep{zhao2022inherent}.

\textbf{Results.}
As shown in Table~\ref{table.fair}, while achieving similar levels of prediction accuracy, \name \ consistently obtains more fair representations across datasets in terms of the different criteria, especially in UCI Adult. CCL improves representation fairness over CLIP by constraining the negative sampling to be within the same sensitive attribute subgroup. 
However, given the limited sample size of the minority groups, it performs inferior to \name.

\section{Related Work}
\textbf{Representation Learning for Molecules.}
Traditional unimodal techniques for molecular representation learning~\citep{rogers2010extended,durant2002reoptimization,wang2022molecular,xu2021self, wang2019smiles} often falter because molecules with similar structures can have very different effects in the cellular context. Given such limitations, researchers are increasingly turning to multimodal methods -- incorporating additional modalities like 3D structure~\citep{stark20223d,zhou2022uni} as well as high-throughput cell imaging.
For instance, \citet{nguyen2023molecule} utilize the CLIP model~\citep{radford2021learning} to learn multimodal molecular and cell image representations. \citet{zheng2022cross} incorporate masked graph modeling and generative graph-image matching objectives to elevate the quality of the learned representations. However, they have difficulties generalizing across molecular structures due to the variability and batch effect in the drug screens as the training data, which can introduce confounding variables and bias the representations. 

\textbf{Batch Effect Removal.} Various approaches have been proposed to correct for batch effects, including a linear method~\citep{johnson2007adjusting}, a mixture-model based method~\citep{korsunsky2019fast}, neighbor-based methods~\citep{hie2019efficient, haghverdi2018batch}, and variational-inference based methods~\citep{lopez2018deep, li2020deep}. 
\rephrase{These methods, typically used as independent preprocessing steps designed for specific biology experimental techniques, are challenging to be applied to more general contexts. In contrast, our approach treats batch effects as sensitive attributes to remove, providing a flexible framework applicable across various domains.}

\textbf{Representation Learning with Sensitive Attributes.} 
\rebuttal{\citet{wu2022discovering, chen2022learning, yang2022learning, miao2022interpretable, fan2022debiasing} learn invariant features for better out-of distribution generalization in graph classification or interpretability. However, they focus on the unimodal supervised learning setting. In the unsupervised setting, }to learn fair representations, \citet{song2019learning} use variational and adversarial objectives as a tractable lower bound on conditional mutual information, which requires a challenging tri-level optimization. \citet{ma2021conditional} 
\rephrase{relate conditional mutual information with a conditional contrastive learning objective, }
where the negative pairs are drawn from the conditional marginal distributions. \citet{tsai2021conditional} further extend it to handle continuous conditioning variables 
by leveraging similarity kernels. However, this does not resolve the confounder issue in drug screening data, since observations per batch are limited and similarity between batches as a categorical identifier is hard to measure. 
\citet{zhang2022fairness} rely on recent advances in image generation to obtain balanced training data prior to the representation learning task, but generative models for high-content screening data are in their infancy \citep{yang2021mol2image}.

\section{Conclusion}
We present \name, a general framework for multimodal molecular representation learning. 
\name \ effectively integrates diverse high-content drug screens with the chemical structure in the presence of confounders like batch effects. 
Theoretically, we show that \name \ maximizes the variational lower bound on the conditional mutual information of the representations given the batch identifier. 
\rephrase{Empirically, we demonstrate that \name \ outperforms baselines on two drug screening datasets (gene expression and single-cell imaging) on two tasks, molecular property prediction and molecule-phenotype retrieval.}
Finally, we show that the information maximization principle underlying \name \ extends well beyond drug discovery and can be viewed as a general-purpose framework. In particular, we empirically demonstrate its efficacy in eliminating sensitive information, focusing on fairness applications across three major datasets and various fairness criteria.

\subsection*{Acknowledgments}
We thank Adityanarayanan Radhakrishnan, Joshua Robinson, Yonglong Tian, Yilun Xu, Shangyuan Tong, Wengong Jin, Hannes St\"ark, Boyuan Chen, Zongyu Lin, and Mengfei Xia for helpful discussions and comments on the manuscript.

CW and TJ acknowledge support from the Machine Learning for Pharmaceutical Discovery and Synthesis (MLPDS) consortium.
SG acknowledges funding from the Office of Naval Research grant N00014-20-1-2023 (MURI ML-SCOPE) and NSF award CCF-2112665 (TILOS AI Institute).
CU acknowledges support by NCCIH/NIH (1DP2AT012345), ONR (N00014-22-1-2116), the MIT-IBM Watson AI Lab, AstraZeneca, the Eric and Wendy Schmidt Center at the Broad Institute, and a Simons Investigator Award.

\clearpage
\bibliography{infocore}
\bibliographystyle{infocore}

\appendix
\clearpage
\setcounter{prop}{0}
\section{Proof of Proposition \ref{prop.boundsg}.}
\label{app.proof0}
\begin{prop}
    Given random variables $Z_g,Z_d$ as representations of two data modalities and $X_b$ as the irrelevant attribute, the conditional mutual information $I(Z_d;Z_g|X_b)$ between representations conditioned on the irrelevant attribute has the following lower bound:
    {\small
    \begin{equation}
        \begin{split}
        I(Z_d;Z_g|X_b) \geq \mathbb{E}_{p(z_d,z_g,x_b)} \left[ h(z_d,z_g,x_b) \right] - e^{-1} \mathbb{E}_{p(z_d)}\mathbb{E}_{p(z_g,x_b)} \left[ e^{h(z_d,z_g,x_b)} \right] - I(Z_d;X_b),
        \end{split}
    \end{equation}
    }
    which results from using an energy-based variational family $q(z_g,x_b|z_d)$ to approximate the distribution $p(z_g,x_b|z_d)$: {\small$$q(z_g,x_b|z_d) = \frac{p(z_g,x_b)}{Z(z_d)} e^{h(z_d,z_g,x_b)}, \text{ where } Z(z_d) = \mathbb{E}_{p(z_g,x_b)}\left[ e^{h(z_d,z_g,x_b)} \right] \text{ is the partition function},$$}
    with equality holding when the critic $h$ satisfies $h^*(z_d,z_g,x_b)=1+\log \frac{p(z_g,x_b|z_d)}{p(z_g,x_b)}$.
\end{prop}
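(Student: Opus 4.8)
The plan is to reduce the conditional bound to the unconditional energy-based variational bound of \citet{poole2019variational}, applied to the \emph{joint} variable $(Z_g, X_b)$. The enabling observation is the chain rule for mutual information,
\begin{equation}
I(Z_d; Z_g, X_b) = I(Z_d; X_b) + I(Z_d; Z_g | X_b),
\end{equation}
which rearranges to $I(Z_d; Z_g | X_b) = I(Z_d; Z_g, X_b) - I(Z_d; X_b)$. This already isolates the $-I(Z_d;X_b)$ term appearing in the claim, so it suffices to lower bound the joint mutual information $I(Z_d; Z_g, X_b)$ by the first two terms of the stated inequality~\eqref{eq.sg}.

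To obtain that lower bound, I treat $(z_g,x_b)$ as a single variable with marginal $p(z_g,x_b)$ and conditional $p(z_g,x_b|z_d)$, and introduce the energy-based variational family $q(z_g,x_b|z_d) = p(z_g,x_b)\, e^{h(z_d,z_g,x_b)}/Z(z_d)$ with $Z(z_d) = \mathbb{E}_{p(z_g,x_b)}[e^{h}]$, exactly as specified in the statement. Writing $I(Z_d;Z_g,X_b) = \mathbb{E}_{p(z_d,z_g,x_b)}\!\left[\log \tfrac{p(z_g,x_b|z_d)}{p(z_g,x_b)}\right]$ and adding and subtracting $\log q(z_g,x_b|z_d)$, the nonnegativity of $\mathbb{E}_{p(z_d)}\big[\KL\big(p(z_g,x_b|z_d)\,\|\,q(z_g,x_b|z_d)\big)\big]$ gives
\begin{equation}
I(Z_d; Z_g, X_b) \ge \mathbb{E}_{p(z_d,z_g,x_b)}[h(z_d,z_g,x_b)] - \mathbb{E}_{p(z_d)}[\log Z(z_d)],
\end{equation}
since $\log\tfrac{q(z_g,x_b|z_d)}{p(z_g,x_b)} = h - \log Z(z_d)$. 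I then linearize the partition term with the tangent-line inequality $\log t \le t/e$ (the case $a=e$ of $\log t \le t/a + \log a - 1$), yielding $-\mathbb{E}_{p(z_d)}[\log Z(z_d)] \ge -e^{-1}\mathbb{E}_{p(z_d)}\mathbb{E}_{p(z_g,x_b)}[e^{h}]$. Substituting and subtracting $I(Z_d;X_b)$ via the chain rule reproduces \eqref{eq.sg}.

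Finally I would verify the equality condition. The two inequalities used — nonnegativity of the KL term and the tangent-line bound on $\log Z$ — must become equalities simultaneously. Setting $h^*(z_d,z_g,x_b) = 1 + \log\tfrac{p(z_g,x_b|z_d)}{p(z_g,x_b)}$ makes $q(z_g,x_b|z_d) = p(z_g,x_b|z_d)$, so the KL term vanishes, and simultaneously forces $Z(z_d) = \mathbb{E}_{p(z_g,x_b)}[e\cdot p(z_g,x_b|z_d)/p(z_g,x_b)] = e$, at which $\log Z = e^{-1}Z$ holds with equality; this matches the claimed optimal critic. I do not anticipate a serious technical obstacle: the derivation of the generic bound and the linearization are routine. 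The step requiring genuine care is the initial chain-rule decomposition, which reduces the conditional problem to an unconditional one over $(Z_g,X_b)$ and makes the otherwise-mysterious $-I(Z_d;X_b)$ correction term appear naturally; confirming that both inequalities are tight at the same critic is the remaining subtlety.
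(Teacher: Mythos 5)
Your proof is correct and follows essentially the same route as the paper: the paper's opening identity $I(Z_d;Z_g|X_b)=\mathbb{E}_{p(z_d,z_g,x_b)}\left[\log\frac{p(x_b)\,p(z_g,x_b|z_d)}{p(x_b|z_d)\,p(z_g,x_b)}\right]$ is exactly your chain-rule decomposition $I(Z_d;Z_g,X_b)-I(Z_d;X_b)$ written inside a single expectation, after which both arguments use the same energy-based variational family, KL nonnegativity, and the tangent-line bound $\log Z\leq e^{-1}Z$. Your verification that $h^*$ forces $Z(z_d)=e$ and hence makes both inequalities tight simultaneously matches the paper's equality analysis (and is in fact stated a bit more cleanly).
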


\begin{proof}
    By definition of conditional mutual information, 
    \begin{equation}
    \begin{split}
        \textstyle I(Z_d;Z_g|X_b) = \mathbb{E}_{p(z_g,z_d,x_b)} \left [ \log \frac{p(x_b) \cdot p(z_g,x_b|z_d)}{p(x_b|z_d) \cdot p(z_g,x_b)} \right ]. \nonumber
    \end{split}
    \end{equation}
    Replace the intractable conditional distribution $p(z_g,x_b|z_d)$ with the corresponding variational distribution $q(z_g,x_b|z_d)$, we can get a lower bound due to the non-negativity of KL divergence:
    \begin{equation}
    \label{eq.b1}
        \begin{split}
            I(Z_d;Z_g|X_b) &= \mathbb{E}_{p(z_g,z_d,x_b)} \left [ \log \frac{p(x_b) \cdot q(z_g,x_b|z_d)}{p(x_b|z_d) \cdot p(z_g,x_b)} \right ] + \mathbb{E}_{p(z_d)} \left [ \displaystyle \KL\left (p\left(z_g,x_b|z_d\right) \Vert q\left(z_g,x_b|z_d\right) \right ) \right] \\&\geq \mathbb{E}_{p(z_g,z_d,x_b)} \left [ \log \frac{p(x_b) \cdot q(z_g,x_b|z_d)}{p(x_b|z_d) \cdot p(z_g,x_b)} \right ].
        \end{split}
    \end{equation}
    We can choose an energy-based variational family that uses a critic $h(z_d,z_g,x_b)$, scaled by the marginal density $p(z_g,x_b)$ and normalized by the partition function $Z(z_d)$:
    $$q(z_g,x_b|z_d) = \frac{p(z_g,x_b)}{Z(z_d)} e^{h(z_d,z_g,x_b)}, \text{ where } Z(z_d) = \mathbb{E}_{p(z_g,x_b)}\left[ e^{h(z_d,z_g,x_b)} \right].$$

    By plugging this into the lower bound in Equation \ref{eq.b1}, we obtain
    \begin{equation}
        \begin{split}
            I(Z_d;Z_g|X_b) &\geq \mathbb{E}_{p(z_g,z_d,x_b)} \left [ \log \frac{p(x_b) \cdot  e^{h(z_d,z_g,x_b)}}{p(x_b|z_d) \cdot Z(z_d)} \right ] \nonumber\\&= \mathbb{E}_{p(z_d,z_g,x_b)} \left[ h(z_d,z_g,x_b) \right] -  \mathbb{E}_{p(z_d)}\left[\log Z(z_d)\right] - I(Z_d;X_b).
        \end{split}
    \end{equation}
    
    The intractable log-partition function can be upper bounded using the inequality $\log (x) \leq \frac{x}{a} + \log (a) - 1, \ \forall \ x,a > 0$, which is tight when $x=a$. Applying the inequality to $\log Z(z_d)$ and taking $a=e$, we obtain the following upper bound of the log-partition function:
    \begin{equation}
        \label{eq.b2}
        \log Z(z_d) \leq e^{-1} Z(z_d).
    \end{equation}
    This results in the following single sample lower bound of the conditional mutual information:
    $$I(Z_d;Z_g|X_b) \geq \mathbb{E}_{p(z_d,z_g,x_b)} \left[ h(z_d,z_g,x_b) \right] -  e^{-1}\mathbb{E}_{p(z_d)}\mathbb{E}_{p(z_g,x_b)} \left[ e^{h(z_d,z_g,x_b)} \right] - I(Z_d;X_b).$$

    Denoting the optimal critic as $h^*(z_d,z_g,x_b)$,
    Equation \ref{eq.b1} is tight when $q(z_g,x_b|z_d)=p(z_g,x_b|z_d)$, i.e.
    $$h^*(z_d,z_g,x_b)=\log p(z_g,x_b|z_d)+c(z_g,x_b),$$ where $c(z_d,x_b)$ is an arbitrary function solely depend on $z_d$ and $x_b$.
    
    Equation \ref{eq.b2} is tight when $Z(z_d)=e$, i.e. ,
    $$e=\int_{z_g,x_b} p(z_g,x_b) e^{h^*(z_d,z_g,x_b)} \ \text{d} z_g \text{d} x_b = \int_{z_g,x_b} p(z_g,x_b) e^{c(z_g,x_b)} p(z_g,x_b|z_d) \ \text{d} z_g \text{d} x_b,$$
    which holds when $c(z_g,x_b)=1-\log p(z_g,x_b)$.

    Thus, the optimal critic satisfies $h^*(z_d,z_g,x_b)=1+\log \frac{p(z_g,x_b|z_d)}{p(z_g,x_b)}$, which completes the proof.
\end{proof}

\section{Proof of Proposition \ref{prop.bound1}.}
\label{app.proof1}
\begin{prop}
    Given samples {\small$(z_d^1,z_g^1,x_b^1)$} drawn from the joint distribution {\small$(Z_d^1,Z_g^1,X_b^1) \sim p(z_d,z_g,x_b)$} and {\small$z_d^{2:K}$} drawn i.i.d.~from the marginal distribution {\small$Z_d^i \sim p(z_d)$} for $i=2,...,K$, then the conditional mutual information {\small$I(Z_d^1;Z_g^1|X_b^1)$} has the following lower bound:
    {\small
    \begin{equation}I(Z_d^1;Z_g^1|X_b^1) \geq -L_{\text{CLIP}} - L_{\text{CLF}} + C - H(X_b^1),\end{equation}}
    \vspace{-9pt}
    {\small
    \begin{equation}
    \begin{split}
        \text{\normalsize where} \ \ L_{\text{CLIP}} =& -\frac{1}{2} \left[ \mathbb{E}_{p(z_d^1,z_g^1,x_b^1)p(z_d^{2:K})} \left [ \log \frac{e^{f(z_g^1,z_d^1)}}{\frac{1}{K} \sum_{i=1}^K e^{f(z_g^1,z_d^i)} \cdot \notation{\hat{p}_g(x_b^1|z_g^1,z_d^i)}} \right ] \right.\nonumber\\ &+ \left. \mathbb{E}_{p(z_d^1,z_g^1,x_b^1)p(z_g^{2:K})} \left [ \log \frac{e^{f(z_g^1,z_d^1)}}{\frac{1}{K} \sum_{i=1}^K e^{f(z_g^i,z_d^1)} \cdot \notation{\hat{p}_d(x_b^1|z_g^i,z_d^1)}} \right ] \right],\nonumber\\
        L_{\text{CLF}} =& \frac{1}{2} \left[ \mathbb{E}_{p(z_d^1)} \left [ \displaystyle \KL \left (p\left(x_b^1|z_d^1\right) \Vert \hat{p}\left(x_b^1|z_d^1\right) \right ) \right] + \mathbb{E}_{p(z_g^1)} \left [ \displaystyle \KL \left( p\left(x_b^1|z_g^1\right) \Vert \hat{p}\left(x_b^1|z_g^1\right) \right) \right] \right],\nonumber\\
        C =& \frac{1}{2} \ \mathbb{E}_{p(z_d^1,z_g^1,x_b^1)} \left [ \log\frac{\notation{\hat{p}_g(x_b^1|z_g^1,z_d^1) \cdot \hat{p}_d(x_b^1|z_g^1,z_d^1)}}{\hat{p}(x_b^1|z_g^1) \cdot \hat{p}(x_b^1|z_d^1)}\right].
    \end{split}
    \end{equation}}
    
    The lower bound holds for any choice of critic {\small$f(z_g,z_d)$} and variational distribution \notation{{\small$\hat{p}_g(x_b|z_g,z_d)$}}, \notation{{\small$\hat{p}_d(x_b|z_g,z_d)$}}, with equality holding when {\small$f^*(z_d,z_g)=\log \frac{p(z_g|z_d)}{p(z_g)}$} and \notation{{\small$\hat{p}_g^*(x_b|z_g,z_d) = \hat{p}_d^*(x_b|z_g,z_d) = p(x_b|z_g,z_d)$}}.\footnote{We note that {\small$\hat{p}(x_b|z_g)$} and {\small$\hat{p}(x_b|z_d)$} cancel out in {\small$-L_{\text{CLF}}+C$}, but we spell out these terms for ease of estimating {\small$C$} later.} 
\end{prop}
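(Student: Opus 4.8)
The plan is to reduce the statement to the established multi-sample (InfoNCE/CPC) lower bound of \citet{oord2018representation,poole2019variational} via an exact chain-rule decomposition, combined with an exact rewriting of a marginal mutual-information term through the variational classifiers. The starting point is the chain rule $I(Z_d^1;Z_g^1|X_b^1) = I(Z_d^1;(Z_g^1,X_b^1)) - I(Z_d^1;X_b^1)$, which isolates a joint mutual information that the multi-sample bound can handle and a nuisance term $I(Z_d^1;X_b^1)$ that I will convert into the classification loss. Routing the derivation through $I(Z_d^1;(Z_g^1,X_b^1))$, rather than naively splitting off an unconditional $I(Z_d^1;Z_g^1)$, is the decision that makes all coefficients come out correctly.

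First I would apply the multi-sample InfoNCE bound to $I(Z_d^1;(Z_g^1,X_b^1))$, drawing the negatives $z_d^{2:K}$ i.i.d.\ from $p(z_d)$ and using the \emph{augmented} critic $g(z_d,z_g,x_b) = f(z_g,z_d) + \log\hat p_g(x_b|z_g,z_d)$, so that $e^{g}=e^{f}\,\hat p_g$ reproduces exactly the reweighted denominator $\frac1K\sum_i e^{f(z_g^1,z_d^i)}\hat p_g(x_b^1|z_g^1,z_d^i)$. The key algebraic move is to split the factor $\hat p_g(x_b^1|z_g^1,z_d^1)$ out of the numerator of the InfoNCE fraction: the remaining fraction has the plain numerator $e^{f(z_g^1,z_d^1)}$ and contributes the $z_d$-half of $-L_{\text{CLIP}}$, while the detached term $\mathbb{E}[\log\hat p_g(x_b^1|z_g^1,z_d^1)]$ will feed into $C$.

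Next I would rewrite the nuisance term \emph{exactly} (no inequality) by inserting the variational classifier, using the identity $I(Z_d^1;X_b^1) = \mathbb{E}[\log\hat p(x_b^1|z_d^1)] + H(X_b^1) + \mathbb{E}_{p(z_d^1)}[\KL(p(x_b^1|z_d^1)\Vert\hat p(x_b^1|z_d^1))]$. Subtracting this produces $-H(X_b^1)$, the $-\mathbb{E}[\log\hat p(x_b^1|z_d^1)]$ that combines with the detached $\log\hat p_g$ to form the $z_d$-half of $C$, and the $\KL$ term that is exactly the $z_d$-half of $L_{\text{CLF}}$. Performing the symmetric derivation with $z_d$ and $z_g$ interchanged — bounding $I(Z_g^1;(Z_d^1,X_b^1))$ with critic $f(z_g,z_d)+\log\hat p_d(x_b|z_d,z_g)$ and rewriting $I(Z_g^1;X_b^1)$ through $\hat p(x_b^1|z_g^1)$ — gives the matching $z_g$-halves. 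Since both are lower bounds on the same quantity, averaging them with weight $\tfrac12$ assembles $-L_{\text{CLIP}}$, $C$, $-L_{\text{CLF}}$ and $-H(X_b^1)$ with exactly the stated coefficients.

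The main obstacle, and the step demanding care, is the bookkeeping of the batch-posterior terms. A naive decomposition $I(Z_d;Z_g|X_b)=I(Z_d;Z_g)+(\text{correction})$ yields the true posteriors $p(x_b|z_d),p(x_b|z_g)$ with doubled coefficients and cannot be converted into the classifier-based divergences while preserving the inequality; passing through the chain rule for $I(Z_d;(Z_g,X_b))$ and the exact classifier identity for $I(Z_d;X_b)$ is what repairs both issues. For the equality conditions, only the InfoNCE inequality is lossy: it is tight when the augmented critic equals $\log\frac{p(z_d|z_g,x_b)}{p(z_d)}$ up to a function of the anchor $(z_g,x_b)$. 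Since $\log\frac{p(z_d|z_g,x_b)}{p(z_d)}=\log\frac{p(z_g|z_d)}{p(z_g)}+\log\frac{p(x_b|z_g,z_d)}{p(x_b|z_g)}$ and the anchor-only term $-\log p(x_b|z_g)$ cancels between numerator and denominator of the fraction, this holds precisely when $f^*=\log\frac{p(z_g|z_d)}{p(z_g)}$ and $\hat p_g^*=\hat p_d^*=p(x_b|z_g,z_d)$; the $L_{\text{CLF}}$ term and the $\hat p(x_b|z_d),\hat p(x_b|z_g)$ factors enter only through exact identities and so impose no further tightness requirement, matching the conditions in the statement.
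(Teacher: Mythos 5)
Your proposal is correct and is essentially the paper's own proof in modular form: the paper's opening step rewrites the conditional mutual information as $\mathbb{E}\bigl[\log\tfrac{p(x_b^1)}{p(x_b^1|z_d^1)}\cdot\tfrac{p(z_g^1,x_b^1|z_d^1)}{p(z_g^1,x_b^1)}\bigr]$, which is exactly your chain rule $I(Z_d^1;(Z_g^1,X_b^1))-I(Z_d^1;X_b^1)$ written at the integrand level; its energy-based variational family $q(z_g^1,x_b^1|z_d^{1:K})\propto p(z_g^1,x_b^1)\,e^{f(z_d^1,z_g^1)}\hat{p}_g(x_b^1|z_g^1,z_d^1)/a(z_g^1,x_b^1;z_d^{1:K})$ is exactly your augmented critic $f+\log\hat{p}_g$; and its KL-extraction step for $p(x_b^1|z_d^1)$ is your exact classifier identity for $I(Z_d^1;X_b^1)$, followed by the same symmetrization and averaging. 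The only difference is presentational: you invoke the multi-sample InfoNCE bound as a known lemma, whereas the paper re-derives it inline via the log-partition inequality and the symmetry argument showing $e^{-1}\mathbb{E}[Z(z_d^{1:K})]=1$, and both treatments make the same (standard) tightness claim for the optimal critics.
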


\begin{proof}
By definition of conditional mutual information,
\begin{equation}
    \begin{split}
        \textstyle I(Z_d^1;Z_g^1|X_b^1) &= I(Z_d^{1:K};Z_g^1|X_b^1) = \mathbb{E}_{p(z_g^1,z_d^1,x_b^1)p(z_d^{2:K})} \left [ \log \frac{p(x_b^1) \cdot p(z_d^1,z_g^1,x_b^1)}{p(z_d^1,x_b^1) \cdot p(z_g^1,x_b^1)} \right ] \nonumber\\&= \mathbb{E}_{p(z_g^1,z_d^1,x_b^1)p(z_d^{2:K})} \left [ \log \frac{\frac{p(x_b^1)}{p(x_b^1|z_d^1)} \cdot p(z_g^1,x_b^1|z_d^1)}{p(z_g^1,x_b^1)}. \right ]
    \end{split}
\end{equation}
Then applying a variational distribution $q(z_g^1,x_b^1|z_d^{1:K})$ to approximate $p(z_g^1,x_b^1|z_d^{1:K})= p(z_g^1,x_b^1|z_d^1)$ yields the following lower bound on the conditional mutual information:
$$\textstyle I(Z_d^1;Z_g^1|X_b^1) \geq \mathbb{E}_{p(z_d^1,z_g^1,x_b^1)p(z_d^{2:K})} \left [ \log \frac{\frac{p(x_b^1)}{p(x_b^1|z_d^1)} \cdot q(z_g^1,x_b^1|z_d^{1:K})}{p(z_g^1,x_b^1)} \right].$$

Observing that $p(z_g^1,x_b^1|z_d^1) = p(z_g^1|z_d^1)\cdot p(x_b^1|z_g^1,z_d^1)$, we can approximate the two terms separately. For this, we define the variational distribution as:
\begin{equation}
    \begin{split}
    &q(z_g^1,x_b^1|z_d^{1:K})=\frac{e \cdot p(z_g^1,x_b^1) \frac{e^{f(z_d^1,z_g^1)} \notation{\hat{p}_g(x_b^1|z_g^1,z_d^1)}}{a(z_g^1,x_b^1;z_d^{1:K})}}{Z(z_d^{1:K})} , \nonumber\\ \text{where } &Z(z_d^{1:K})=e \cdot \mathbb{E}_{p(z_g^1,x_b^1)} \left [ \frac{e^{f(z_d^1,z_g^1)} \notation{\hat{p}_g(x_b^1|z_g^1,z_d^1)}}{a(z_g^1,x_b^1;z_d^{1:K})}\right], \nonumber\\ &a(z_g^1,x_b^1;z_d^{1:K}) = \frac{1}{K} \sum_{i=1}^K e^{f(z_d^i,z_g)} \notation{\hat{p}_g(x_b^1|z_g^1,z_d^i)}
    \end{split}
\end{equation}

By plugging this term into the inequality above, we obtain
\begin{equation}
    \begin{split}
        I(Z_d^1;Z_g^1|X_b^1) &\geq \mathbb{E}_{p(z_d^1,z_g^1,x_b^1)p(z_d^{2:K})} \left [ \log \frac{e \cdot \frac{p(x_b^1)}{p(x_b^1|z_d^1)} \cdot  \frac{e^{f(z_d^1,z_g^1)} \notation{\hat{p}_g(x_b^1|z_g^1,z_d^1)}}{a(z_g^1,x_b^1;z_d^{1:K})}}{Z(z_d^{1:K})} \right] \nonumber\\&= 1 + \mathbb{E}_{p(z_d^1,z_g^1,x_b^1)p(z_d^{2:K})} \left [ \log  \frac{\frac{p(x_b^1)}{p(x_b^1|z_d^1)} \cdot e^{f(z_d^1,z_g^1)} \notation{\hat{p}_g(x_b^1|z_g^1,z_d^1)}}{a(z_g^1,x_b^1;z_d^{1:K})}\right] - \mathbb{E}_{p(z_d^{1:K})} \left [ \log Z(z_d^{1:K})\right] \nonumber\\&\geq 1 + \mathbb{E}_{p(z_d^1,z_g^1,x_b^1)p(z_d^{2:K})} \left [ \log  \frac{\frac{p(x_b^1)}{p(x_b^1|z_d^1)} \cdot e^{f(z_d^1,z_g^1)} \notation{\hat{p}_g(x_b^1|z_g^1,z_d^1)}}{a(z_g^1,x_b^1;z_d^{1:K})}\right] - e^{-1} \mathbb{E}_{p(z_d^{1:K})} \left [Z(z_d^{1:K})\right].
    \end{split}
\end{equation}
For the last inequality we used $\log x \leq \frac{x}{a} + \log a -1$ and took $a=e$.

By symmetry, we can conclude as follows that the last term equals to the constant 1:
\begin{equation}
    \begin{split}
        e^{-1} \mathbb{E}_{p(z_d^{1:K})} \left [Z(z_d^{1:K})\right] &= e^{-1} \mathbb{E}_{p(z_d^{1:K})} \left [e \cdot \mathbb{E}_{p(z_g^1,x_b^1)} \left [ \frac{e^{f(z_d^1,z_g^1)} \notation{\hat{p}_g(x_b^1|z_g^1,z_d^1)}}{a(z_g^1,x_b^1;z_d^{1:K})}\right]\right] \nonumber\\&= \mathbb{E}_{p(z_g^1,x_b^1)p(z_d^{1:K})} \left [ \frac{e^{f(z_d^1,z_g^1)} \notation{\hat{p}_g(x_b^1|z_g^1,z_d^1)}}{a(z_g^1,x_b^1;z_d^{1:K})}\right] \nonumber\\&= \frac{1}{K}\sum_{i=1}^K \mathbb{E}_{p(z_g^1,x_b^1)p(z_d^{1:K})} \left [ \frac{e^{f(z_d^i,z_g)} \notation{\hat{p}_g(x_b^1|z_g^1,z_d^i)}}{a(z_g^1,x_b^1;z_d^{1:K})}\right] \nonumber\\&= \mathbb{E}_{p(z_g^1,x_b^1)p(z_d^{1:K})} \left [  \frac{\frac{1}{K}\sum_{i=1}^K e^{f(z_d^i,z_g)} \notation{\hat{p}_g(x_b^1|z_g^1,z_d^i)}}{a(z_g^1,x_b^1;z_d^{1:K})}\right] = 1.
    \end{split}
\end{equation}

Therefore,
$$I(Z_d^1;Z_g^1|X_b^1) = I(Z_d^{1:K};Z_g^1|X_b^1) \geq \mathbb{E}_{p(z_d^1,z_g^1,x_b^1)p(z_d^{2:K})} \left [ \log  \frac{\frac{p(x_b^1)}{p(x_b^1|z_d^1)} \cdot e^{f(z_d^1,z_g^1)} \notation{\hat{p}_g(x_b^1|z_g^1,z_d^1)}}{a(z_g^1,x_b^1;z_d^{1:K})}\right]$$
with optimal critics $f^*(z_d,z_g)=\log p(z_g|z_d) + c(z_g); \ \notation{\hat{p}_g^*(x_b|z_g,z_d)} = p(x_b|z_g,z_d)$.

Using $\hat{p}(x_b^1|z_d^1)$ as an estimator of the posterior batch distribution $p(x_b^1|z_d^1)$, the lower bound can be written as:
\begin{equation}
\color{\rebuttalcolor}
    \begin{split}
    I(Z_d^1;Z_g^1|X_b^1) &\geq \mathbb{E}_{p(z_d^1,z_g^1,x_b^1)p(z_d^{2:K})} \left [ \log \frac{\frac{p(x_b^1)}{\hat{p}(x_b^1|z_d^1)} e^{f(z_g^1,z_d^1)} \cdot \notation{\hat{p}_g(x_b^1|z_g^1,z_d^1)}}{\frac{1}{K} \sum_{i=1}^K e^{f(z_g^1,z_d^i)} \cdot \notation{\hat{p}_g(x_b^1|z_g^1,z_d^i)}} \cdot \frac{\hat{p}(x_b^1|z_d^1)}{p(x_b^1|z_d^1)} \right ]\\
    &=\mathbb{E}_{p(z_d^1,z_g^1,x_b^1)p(z_d^{2:K})} \left [ \log \frac{\frac{p(x_b^1)}{\hat{p}(x_b^1|z_d^1)} e^{f(z_g^1,z_d^1)} \cdot \notation{\hat{p}_g(x_b^1|z_g^1,z_d^1)}}{\frac{1}{K} \sum_{i=1}^K e^{f(z_g^1,z_d^i)} \cdot \notation{\hat{p}_g(x_b^1|z_g^1,z_d^i)}} \right ] - \mathbb{E}_{p(z_d^1,x_b^1)} \left [ \log \frac{p(x_b^1|z_d^1)}{\hat{p}(x_b^1|z_d^1)}\right].\nonumber
    \end{split}
    \end{equation}

\rebuttal{Note that the last term is the expectation of the KL-divergence between $p(x_b^1|z_d^1)$ and $\hat{p}(x_b^1|z_d^1)$, i.e.,}
\begin{equation}
    \color{\rebuttalcolor}
    \mathbb{E}_{p(z_d^1,x_b^1)} \left [ \log \frac{p(x_b^1|z_d^1)}{\hat{p}(x_b^1|z_d^1)}\right] = \mathbb{E}_{p(z_d^1)} \left [ \mathbb{E}_{p(x_b^1|z_d^1)} \left [ \log \frac{p(x_b^1|z_d^1)}{\hat{p}(x_b^1|z_d^1)}\right]\right] = \mathbb{E}_{p(z_d^1)} \left [ \displaystyle \KL\left (p\left(x_b^1|z_d^1\right) \Vert \hat{p}\left(x_b^1|z_d^1\right) \right ) \right]. \nonumber
\end{equation}

\rebuttal{Thus the lower bound can be rewritten as follows:}
\begin{equation}
    \begin{split}
    I(Z_d^1;Z_g^1|X_b^1)
    &\geq \mathbb{E}_{p(z_d^1,z_g^1,x_b^1)p(z_d^{2:K})} \left [ \log \frac{\frac{p(x_b^1)}{\hat{p}(x_b^1|z_d^1)} e^{f(z_g^1,z_d^1)} \cdot \notation{\hat{p}_g(x_b^1|z_g^1,z_d^1)}}{\frac{1}{K} \sum_{i=1}^K e^{f(z_g^1,z_d^i)} \cdot \notation{\hat{p}_g(x_b^1|z_g^1,z_d^i)}} \right ] \nonumber\\&- \mathbb{E}_{p(z_d^1)} \left [ \displaystyle \KL\left (p\left(x_b^1|z_d^1\right) \Vert \hat{p}\left(x_b^1|z_d^1\right) \right ) \right] \nonumber\\&= \mathbb{E}_{p(z_d^1,z_g^1,x_b^1)p(z_d^{2:K})} \left [ \log \frac{e^{f(z_g^1,z_d^1)}}{\frac{1}{K} \sum_{i=1}^K e^{f(z_g^1,z_d^i)} \cdot \notation{\hat{p}_g(x_b^1|z_g^1,z_d^i)}} \right ] \nonumber\\&+ \mathbb{E}_{p(z_d^1,z_g^1,x_b^1)} \left [ \log\frac{\notation{\hat{p}_g(x_b^1|z_g^1,z_d^1)}}{\hat{p}(x_b^1|z_d^1)}\right] - H(X_b^1) - \mathbb{E}_{p(z_d^1)} \left [ \displaystyle \KL\left (p\left(x_b^1|z_d^1\right) \Vert \hat{p}\left(x_b^1|z_d^1\right) \right ) \right].
    \end{split}
\end{equation}
Note that, anagolously, we have
\begin{equation}
    \begin{split}
        I(Z_d^1;Z_g^1|X_b^1) &= I(Z_d^1;Z_g^{1:K}|X_b^1) \nonumber\\&\geq \mathbb{E}_{p(z_d^1,z_g^1,x_b^1)p(z_g^{2:K})} \left [ \log \frac{e^{f(z_g^1,z_d^1)}}{\frac{1}{K} \sum_{i=1}^K e^{f(z_g^i,z_d^1)} \cdot \notation{\hat{p}_d(x_b^1|z_g^i,z_d^1)}} \right ] \nonumber\\&+ \mathbb{E}_{p(z_d^1,z_g^1,x_b^1)} \left [ \log\frac{\notation{\hat{p}_d(x_b^1|z_g^1,z_d^1)}}{\hat{p}(x_b^1|z_g^1)}\right] - H(X_b^1) - \mathbb{E}_{p(z_g^1)} \left [ \displaystyle \KL\left (p\left(x_b^1|z_g^1\right) \Vert \hat{p}\left(x_b^1|z_g^1\right) \right ) \right]
    \end{split}
\end{equation}
with optimal critics $f^*(z_d,z_g)=\log p(z_d|z_g) + c(z_d); \ \notation{\hat{p}_d^*(x_b|z_g,z_d)} = p(x_b|z_g,z_d)$.

Incorporating the two losses, we obtain
\begin{equation}
    \begin{split}
        I(Z_d^1;Z_g^1|X_b^1) &= \frac{1}{2} \left[I(Z_d^{1:K};Z_g^1|X_b^1)+I(Z_d^1;Z_g^{1:K}|X_b^1)\right] \nonumber\\&\geq \frac{1}{2} \left[ \mathbb{E}_{p(z_d^1,z_g^1,x_b^1)p(z_d^{2:K})} \left [ \log \frac{e^{f(z_g^1,z_d^1)}}{\frac{1}{K} \sum_{i=1}^K e^{f(z_g^1,z_d^i)} \cdot \notation{\hat{p}_g(x_b^1|z_g^1,z_d^i)}} \right ] \right.\nonumber\\ &+ \left. \mathbb{E}_{p(z_d^1,z_g^1,x_b^1)p(z_g^{2:K})} \left [ \log \frac{e^{f(z_g^1,z_d^1)}}{\frac{1}{K} \sum_{i=1}^K e^{f(z_g^i,z_d^1)} \cdot \notation{\hat{p}_d(x_b^1|z_g^i,z_d^1)}} \right ] \right] \nonumber\\&- \frac{1}{2} \left[ \mathbb{E}_{p(z_d^1)} \left [ \displaystyle \KL\left (p\left(x_b^1|z_d^1\right) \Vert \hat{p}\left(x_b^1|z_d^1\right) \right ) \right] + \mathbb{E}_{p(z_g^1)} \left [ \displaystyle \KL\left (p\left(x_b^1|z_g^1\right) \Vert \hat{p}\left(x_b^1|z_g^1\right) \right ) \right] \right] \nonumber\\&+ \frac{1}{2} \ \mathbb{E}_{p(z_d^1,z_g^1,x_b^1)} \left [ \log\frac{\notation{\hat{p}_g(x_b^1|z_g^1,z_d^1) \cdot \hat{p}_d(x_b^1|z_g^1,z_d^1)}}{\hat{p}(x_b^1|z_g^1) \cdot \hat{p}(x_b^1|z_d^1)}\right] - H(X_b^1).
    \end{split}
\end{equation}
When $c(z_g)=-\log p(z_g)$, and $c(z_d)=-\log p(z_d)$, the two optimal critics agree with each other, i.e., $f^*(z_d,z_g) = \log \frac{p(z_g|z_d)}{p(z_g)} = \log \frac{p(z_d|z_g)}{p(z_d)},\ \notation{\hat{p}_g^*(x_b|z_g,z_d) = \hat{p}_d^*(x_b|z_g,z_d)} = p(x_b|z_g,z_d)$, which completes the proof.
\end{proof}

\rebuttal{Note that the KL-divergence in $L_{\text{CLF}}$ can be equivalently written as the sum of an entropy term and a cross-entropy term:}
\begin{equation}
    \color{\rebuttalcolor}
    \begin{split}
        \displaystyle \KL\left (p\left(x_b^1|z_d^1\right) \Vert \hat{p}\left(x_b^1|z_d^1\right) \right ) &= H\left (p\left(x_b^1|z_d^1\right)\right) - \mathbb{E}_{p(x_b^1|z_d^1)} \left[\log \hat{p} (x_b^1|z_d^1)\right]\\& = H\left (p\left(x_b^1|z_d^1\right)\right) + CE\left(p(x_b^1|z_d^1), \hat{p} (x_b^1|z_d^1)\right),\nonumber\\
        \displaystyle \KL\left (p\left(x_b^1|z_g^1\right) \Vert \hat{p}\left(x_b^1|z_g^1\right) \right ) &= H\left (p\left(x_b^1|z_g^1\right)\right) - \mathbb{E}_{p(x_b^1|z_g^1)} \left[\log \hat{p} (x_b^1|z_g^1)\right]\\& = H\left (p\left(x_b^1|z_g^1\right)\right) + CE\left(p(x_b^1|z_g^1), \hat{p} (x_b^1|z_g^1)\right).\nonumber
    \end{split}
\end{equation}
\rebuttal{Then $L_{\text{CLF}}$ can be factorized into the cross-entropy loss of the classifiers $\hat{p} (x_b^1|z_d^1)$ and $\hat{p} (x_b^1|z_g^1)$, and a constant term. Therefore, minimizing $L_{\text{CLF}}$ can be achieved by optimizing the classifiers via the cross-entropy loss.}

\section{Proof of Proposition \ref{prop.boundc}}
\label{app.proof2}
\begin{prop}
    When estimating {\small$\notation{\hat{p}_g(x_b^1|z_g^1,z_d^i)}$} as the weighted average of {\small$\hat{p}(x_b^1|z_g^1)$} and {\small$\hat{p}(x_b^1|z_d^i)$}, and analogously for {\small$\notation{\hat{p}_d(x_b^1|z_g^i,z_d^1)}$}, i.e. {\small$$\notation{\hat{p}_g(x_b^1|z_g^1,z_d^i)} = \alpha \cdot \hat{p}(x_b^1|z_g^1) + (1-\alpha) \cdot \hat{p}(x_b^1|z_d^i), \ \notation{\hat{p}_d(x_b^1|z_g^i,z_d^1)} = \alpha \cdot \hat{p}(x_b^1|z_d^1) + (1-\alpha) \cdot \hat{p}(x_b^1|z_g^i),$$} then {\small$C=\frac{1}{2} \ \mathbb{E}_{p(z_d^1,z_g^1,x_b^1)} \left [ \log\frac{\notation{\hat{p}_g(x_b^1|z_g^1,z_d^1) \cdot \hat{p}_d(x_b^1|z_g^1,z_d^1)}}{\hat{p}(x_b^1|z_g^1) \cdot \hat{p}(x_b^1|z_d^1)}\right]$} is lower bounded by the constant zero.
\end{prop}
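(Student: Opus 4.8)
The plan is to show that the quantity inside the expectation defining $C$ is pointwise nonnegative; since the expectation of a nonnegative integrand is nonnegative, this immediately yields $C \geq 0$. To this end I would fix an arbitrary realization $(z_d^1, z_g^1, x_b^1)$ and abbreviate $a := \hat{p}(x_b^1|z_g^1)$ and $b := \hat{p}(x_b^1|z_d^1)$, both positive probabilities in $[0,1]$, so that the ratio under the logarithm is well defined.

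First I would substitute the postulated weighted-average forms evaluated at $i=1$, namely $\hat{p}_g(x_b^1|z_g^1,z_d^1) = \alpha a + (1-\alpha) b$ and $\hat{p}_d(x_b^1|z_g^1,z_d^1) = \alpha b + (1-\alpha) a$, into the numerator of the ratio, whose denominator is simply $ab$. Next I would expand the product and compare it to the denominator. A direct expansion gives $(\alpha a + (1-\alpha) b)(\alpha b + (1-\alpha) a) = [\alpha^2 + (1-\alpha)^2]\, ab + \alpha(1-\alpha)(a^2 + b^2)$, so that the numerator minus the denominator collapses to the perfect square $\alpha(1-\alpha)(a^2 + b^2 - 2ab) = \alpha(1-\alpha)(a-b)^2$. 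Since $\alpha \in [0,1]$ the factor $\alpha(1-\alpha)$ is nonnegative and $(a-b)^2 \geq 0$, hence the numerator is at least the denominator, the ratio is $\geq 1$, and its logarithm is $\geq 0$ for every realization.

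Finally, because the integrand is nonnegative for all $(z_d^1, z_g^1, x_b^1)$, taking the expectation preserves the inequality and gives $C \geq 0$, with equality precisely when $a=b$ (or $\alpha \in \{0,1\}$). The argument is elementary: the only step requiring any care is recognizing that the difference of numerator and denominator factors as $\alpha(1-\alpha)(a-b)^2$, which is also where the assumption that $\alpha \in [0,1]$ (so that the two reweighting estimates are genuine convex combinations) is essential — outside this range the sign of $\alpha(1-\alpha)$ could flip and the bound would fail. I do not anticipate any genuine obstacle beyond verifying this algebraic identity.
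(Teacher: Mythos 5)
Your proof is correct. The paper proves the same statement with the same overall strategy---showing the ratio inside the logarithm is pointwise at least $1$, so the integrand is nonnegative and the expectation is too---but its key step differs from yours: the paper invokes the weighted AM--GM inequality on each factor separately, namely $\alpha a + (1-\alpha) b \geq a^{\alpha} b^{1-\alpha}$ and $\alpha b + (1-\alpha) a \geq b^{\alpha} a^{1-\alpha}$, and multiplies the two bounds so that the exponents add up to give exactly $ab$ in the denominator. You instead expand the product directly and exhibit the difference as a perfect square, $\bigl(\alpha a + (1-\alpha) b\bigr)\bigl(\alpha b + (1-\alpha) a\bigr) - ab = \alpha(1-\alpha)(a-b)^2 \geq 0$. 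Your route buys something the paper's does not state: an exact remainder identity, and with it the precise equality conditions ($a = b$ almost surely, or $\alpha \in \{0,1\}$); it is also more elementary, requiring no named inequality. What the paper's AM--GM route buys is generality: immediately after the proof, the paper extends the proposition to the case where $\hat{p}_g$ is estimated as a \emph{normalized weighted geometric} average, $\hat{p}_g(x_b^1|z_g^1,z_d^i) \propto \hat{p}(x_b^1|z_g^1)^{\alpha}\,\hat{p}(x_b^1|z_d^i)^{1-\alpha}$, and there the same AM--GM step is reused to bound the normalizing constants by $1$, whereas your perfect-square expansion is specific to the arithmetic-average form. Your observation that $\alpha \in [0,1]$ is essential is likewise implicit in the paper (AM--GM requires nonnegative weights), so the two proofs rest on the same hypothesis.
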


\begin{proof}
    The weighted AM-GM inequality indicates that for non-negative numbers $\{x_i\}_{i=1}^n$ and non-negative weights $\{w_i\}_{i=1}^n$, we have the inequality
    $$\frac{w_1 x_1 + w_2 x_2 + \cdots + w_n x_n}{w} \geq \sqrt[w]{x_1^{w_1} x_2^{w_2} \cdots x_n^{w_n} },$$
    where $w=w_1 + w_2 + \cdots + w_n$.

    Thus, by applying the weighted AM-GM inequality, we obtain
    \begin{equation}
        \begin{split}
            C &= \mathbb{E}_{p(z_d^1,z_g^1,x_b^1)} \left [ \log\frac{\notation{\hat{p}_g(x_b^1|z_g^1,z_d^1) \cdot \hat{p}_d(x_b^1|z_g^1,z_d^1)}}{\hat{p}(x_b^1|z_g^1) \cdot \hat{p}(x_b^1|z_d^1)}\right] \nonumber\\&= \mathbb{E}_{p(z_d^1,z_g^1,x_b^1)} \left [ \log\frac{\left( \alpha \cdot \hat{p}(x_b^1|z_g^1) + (1-\alpha) \cdot \hat{p}(x_b^1|z_d^1) \right) \cdot \left( \alpha \cdot \hat{p}(x_b^1|z_d^1) + (1-\alpha) \cdot \hat{p}(x_b^1|z_g^1) \right)}{\hat{p}(x_b^1|z_g^1) \cdot \hat{p}(x_b^1|z_d^1)}\right] \\&\geq \mathbb{E}_{p(z_d^1,z_g^1,x_b^1)} \left [ \log\frac{\left( \hat{p}(x_b^1|z_g^1)^{\alpha} \cdot \hat{p}(x_b^1|z_d^1)^{1-\alpha} \right) \cdot \left(\hat{p}(x_b^1|z_d^1)^{\alpha} \cdot \hat{p}(x_b^1|z_g^1)^{1-\alpha} \right)}{\hat{p}(x_b^1|z_g^1) \cdot \hat{p}(x_b^1|z_d^1)}\right] \nonumber\\&= \mathbb{E}_{p(z_d^1,z_g^1,x_b^1)} \log 1 = 0,
        \end{split}
    \end{equation}
    which completes the proof.
    \end{proof}
    Additionally, we show that this bound can be generalized from the arithmetic average to the weighted geometric average, i.e.,
     {\small$$\notation{\hat{p}_g(x_b^1|z_g^1,z_d^i)} \propto \hat{p}(x_b^1|z_g^1)^{\alpha} \cdot \hat{p}(x_b^1|z_d^i)^{1-\alpha}, \ \notation{\hat{p}_d(x_b^1|z_g^i,z_d^1)} \propto \hat{p}(x_b^1|z_d^1)^{\alpha} \cdot \hat{p}(x_b^1|z_g^i)^{1-\alpha}.$$}

    In this case, the estimated probabilities need to be normalized:
    \begin{equation}
        \begin{split}
            \notation{\hat{p}_g(x_b^1|z_g^1,z_d^i)} &= \frac{\hat{p}(x_b^1|z_g^1)^{\alpha} \cdot \hat{p}(x_b^1|z_d^i)^{1-\alpha}}{\sum_{x_b^{'}} \hat{p}(x_b^{'}|z_g^1)^{\alpha} \cdot \hat{p}(x_b^{'}|z_d^i)^{1-\alpha}}, \nonumber\\ \notation{\hat{p}_d(x_b^1|z_g^i,z_d^1)} &= \frac{\hat{p}(x_b^1|z_d^1)^{\alpha} \cdot \hat{p}(x_b^1|z_g^i)^{1-\alpha}}{\sum_{x_b^{'}} \hat{p}(x_b^{'}|z_d^1)^{\alpha} \cdot \hat{p}(x_b^{'}|z_g^i)^{1-\alpha}}.
        \end{split}
    \end{equation}

    Applying the weighted AM-GM inequality, we obtain 
    \begin{equation}
        \begin{split}
            \sum_{x_b^{'}} \hat{p}(x_b^{'}|z_g^1)^{\alpha} \cdot \hat{p}(x_b^{'}|z_d^1)^{1-\alpha} &\leq \sum_{x_b^{'}} \alpha \cdot \hat{p}(x_b^{'}|z_g^1) + (1-\alpha) \cdot \hat{p}(x_b^{'}|z_d^1) \nonumber\\&= \alpha \cdot \sum_{x_b^{'}}\hat{p}(x_b^{'}|z_g^1) + (1-\alpha) \sum_{x_b^{'}}\hat{p}(x_b^{'}|z_d^1) = 1.
        \end{split}
    \end{equation}
    Analogously, we obtain $$\sum_{x_b^{'}} \hat{p}(x_b^{'}|z_d^1)^{\alpha} \cdot \hat{p}(x_b^{'}|z_g^1)^{1-\alpha} \leq 1.$$
    Plugging this into the definition of $C$, we obtain
    \begin{equation}
        \begin{split}
            C &= \mathbb{E}_{p(z_d^1,z_g^1,x_b^1)} \left [ \log\frac{\notation{\hat{p}_g(x_b^1|z_g^1,z_d^1) \cdot \hat{p}_d(x_b^1|z_g^1,z_d^1)}}{\hat{p}(x_b^1|z_g^1) \cdot \hat{p}(x_b^1|z_d^1)}\right] \nonumber\\&= \mathbb{E}_{p(z_d^1,z_g^1,x_b^1)} \left [ \log\frac{\frac{\hat{p}(x_b^1|z_g^1)^{\alpha} \cdot \hat{p}(x_b^1|z_d^1)^{1-\alpha}}{\sum_{x_b^{'}} \hat{p}(x_b^{'}|z_g^1)^{\alpha} \cdot \hat{p}(x_b^{'}|z_d^1)^{1-\alpha}} \cdot \frac{\hat{p}(x_b^1|z_d^1)^{\alpha} \cdot \hat{p}(x_b^1|z_g^1)^{1-\alpha}}{\sum_{x_b^{'}} \hat{p}(x_b^{'}|z_d^1)^{\alpha} \cdot \hat{p}(x_b^{'}|z_g^1)^{1-\alpha}}}{\hat{p}(x_b^1|z_g^1) \cdot \hat{p}(x_b^1|z_d^1)}\right] \nonumber\\&= - \mathbb{E}_{p(z_d^1,z_g^1,x_b^1)} \left [ \log \sum_{x_b^{'}} \hat{p}(x_b^{'}|z_g^1)^{\alpha} \cdot \hat{p}(x_b^{'}|z_d^1)^{1-\alpha} \cdot \sum_{x_b^{'}} \hat{p}(x_b^{'}|z_d^1)^{\alpha} \cdot \hat{p}(x_b^{'}|z_g^1)^{1-\alpha}\right] \geq 0.
        \end{split}
    \end{equation}
    Therefore, $C$ is lower bounded by the constant zero if $\hat{p}(x_b^1|z_g^1,z_d^i)$ (and analogously for $\hat{p}(x_b^1|z_g^i,z_d^1)$) is estimated as the weighted (either arithmetic or geometric) average of $\hat{p}(x_b^1|z_g^1)$ and $\hat{p}(x_b^1|z_d^i)$.

\section{Gradient of $L_{\text{CLIP}}$ w.r.t. Representations and Optimization Details.}
\label{app.grad}
Denote
$$L_{\text{CLIP}}^{g} = -\mathbb{E}_{p(z_d^1,z_g^1,x_b^1)p(z_d^{2:K})} \left [ \log \frac{e^{f(z_g^1,z_d^1)}}{\frac{1}{K} \sum_{i=1}^K e^{f(z_g^1,z_d^i)} \cdot\left(\alpha \cdot \hat{p}(x_b^1|z_g^1)+(1-\alpha) \cdot \hat{p}(x_b^1|z_d^i)\right)} \right ],$$
$$L_{\text{CLIP}}^{t} = -\mathbb{E}_{p(z_d^1,z_g^1,x_b^1)p(z_g^{2:K})} \left [ \log \frac{e^{f(z_g^1,z_d^1)}}{\frac{1}{K} \sum_{i=1}^K e^{f(z_g^i,z_d^1)} \cdot \left(\alpha \cdot \hat{p}(x_b^1|z_d^1) + (1-\alpha) \cdot \hat{p}(x_b^1|z_g^i)\right)} \right ].$$

Note that the gradients of $L_{\text{CLIP}}^{g}$ w.r.t.~anchor $z_g^1$, positive $z_d^1$, and negatives $z_d^i$ ($i \neq 1$) have the following form:
\begin{equation}
    \begin{split}
        \frac{\partial L_{\text{CLIP}}^g}{\partial z_g^1} &= \mathbb{E}_{p(z_d^1,z_g^1,x_b^1)p(z_d^{2:K})} \left[\frac{\sum_{i=1}^K e^{f(z_g^1,z_d^i)} \cdot \frac{\partial \hat{p}(x_b^1|z_g^1)}{\partial z_g^1} }{\sum_{i=1}^K e^{f(z_g^1,z_d^i)} \cdot \left(\alpha\cdot\hat{p}(x_b^1|z_g^1) + (1-\alpha)\cdot\hat{p}(x_b^1|z_d^i)\right)} \right.\nonumber\\&\left.-\frac{\partial f(z_g^1,z_d^1)}{\partial z_g^1} + \frac{\sum_{i=1}^K e^{f(z_g^1,z_d^i)} \cdot \left(\alpha \cdot\hat{p}(x_b^1|z_g^1) + (1-\alpha)\cdot\hat{p}(x_b^1|z_d^i)\right) \cdot \frac{\partial f(z_g^1,z_d^i)}{\partial z_g^1} }{\sum_{i=1}^K e^{f(z_g^1,z_d^i)} \cdot \left(\alpha\cdot\hat{p}(x_b^1|z_g^1) + (1-\alpha)\cdot\hat{p}(x_b^1|z_d^i)\right)} \right],
    \end{split}
\end{equation}
\begin{equation}
    \begin{split}
        \frac{\partial L_{\text{CLIP}}^g}{\partial z_d^1} &= \mathbb{E}_{p(z_d^1,z_g^1,x_b^1)p(z_d^{2:K})} \left[\frac{e^{f(z_g^1,z_d^1)} \cdot \frac{\partial \hat{p}(x_b^1|z_d^1)}{\partial z_d^1} }{\sum_{i=1}^K e^{f(z_g^1,z_d^i)} \cdot \left(\alpha\cdot\hat{p}(x_b^1|z_g^1) + (1-\alpha)\cdot\hat{p}(x_b^1|z_d^i)\right)} \right.\nonumber\\&\left. -\frac{\partial f(z_g^1,z_d^1)}{\partial z_d^1} + \frac{e^{f(z_g^1,z_d^1)} \cdot \left(\alpha\cdot\hat{p}(x_b^1|z_g^1) + (1-\alpha)\cdot\hat{p}(x_b^1|z_d^i)\right) \cdot \frac{\partial f(z_g^1,z_d^1)}{\partial z_d^1} }{\sum_{i=1}^K e^{f(z_g^1,z_d^i)} \cdot \left(\alpha\cdot\hat{p}(x_b^1|z_g^1) + (1-\alpha)\cdot\hat{p}(x_b^1|z_d^i)\right)} \right],
    \end{split}
\end{equation}
\begin{equation}
    \begin{split}
        \frac{\partial L_{\text{CLIP}}^g}{\partial z_d^i} &=  \mathbb{E}_{p(z_d^1,z_g^1,x_b^1)p(z_d^{2:K})} \left[\frac{e^{f(z_g^1,z_d^i)} \cdot \frac{\partial \hat{p}(x_b^1|z_d^i)}{\partial z_d^i} }{\sum_{i=1}^K e^{f(z_g^1,z_d^i)} \cdot \left(\alpha\cdot\hat{p}(x_b^1|z_g^1) + (1-\alpha)\cdot\hat{p}(x_b^1|z_d^i)\right)} \right.\nonumber\\&+\left.\frac{e^{f(z_g^1,z_d^i)} \cdot \left(\hat{p}(x_b^1|z_g^1) + \hat{p}(x_b^1|z_d^i)\right) \cdot \frac{\partial f(z_g^1,z_d^i)}{\partial z_d^i} }{\sum_{i=1}^K e^{f(z_g^1,z_d^i)} \cdot \left(\alpha\cdot\hat{p}(x_b^1|z_g^1) + (1-\alpha)\cdot\hat{p}(x_b^1|z_d^i)\right)} \right].
    \end{split}
\end{equation}
Similar formulas can be derived for $\frac{\partial L_{\text{CLIP}}^t}{\partial z_d^1}$, $\frac{\partial L_{\text{CLIP}}^t}{\partial z_g^1}$, and $\frac{\partial L_{\text{CLIP}}^t}{\partial z_g^i}$.

\rebuttal{The second line of each gradient formula contains {\small $\partial f(z_g^1,z_d^i)/\partial z_g^1$} and {\small $\partial f(z_g^1,z_d^i)/\partial z_d^i$} (and analogously {\small $\partial f(z_d^1,z_g^i)/\partial z_d^1$} and {\small $\partial f(z_d^1,z_g^i)/\partial z_g^i$}). Thus it represents a standard component that optimizes the representations in a similar manner to CLIP, i.e. making the anchor and positive closer to each other and the anchor and negatives farther away in the latent space, but to different extents according to the reweighting factors in \name. The first line of each gradient formula contains {\small$\partial \hat{p}(x_b^1|z_g^i)/\partial z_g^i$} and {\small$\partial \hat{p}(x_b^1|z_d^i)/\partial z_d^i$}. Thus it is a competing component that optimizes the representations so that their predictive power of the batch identifier becomes lower. Both components drive the representations towards a reduced batch effect.}

\rebuttal{This allows us to employ a hyperparameter $\lambda$ to regulate the extent to which each component contributes to the gradient update. We follow a similar practice as in the gradient reversal layer by \citet{ganin2015unsupervised}. However, instead of reversing the gradient of $\hat{p}(x_b^1|z_g^i)$ w.r.t $z_g^i$ (and that of $\hat{p}(x_b^1|z_d^i)$ w.r.t $z_d^i$), we preserve the sign of the gradient while weighting the magnitude by $\lambda$.}

\rebuttal{During training, we update the encoders and the classifiers iteratively. To be more specific, in each iteration, we first update the parameters of the encoders (i.e., the drug structure encoder $\text{Enc}_d(.;\theta_d)$ and the drug screens encoder $\text{Enc}_g(.;\theta_g)$) to optimize the latent representations based on $L_{\text{CLIP}}$, while keeping the classifiers’ parameters fixed. Then we fix the encoders and update the parameter of the batch classifiers $\hat{p}(x_b^1|z_g^1)$ and $\hat{p}(x_b^1|z_d^1)$ by optimizing the batch classification loss $L_{\text{CLF}}$.}

\section{Review of Contrastive Learning and the InfoNCE Objective.}
\label{app.infonce}
Contrastive learning~\citep{tian2020contrastive, oord2018representation, bachman2019learning}, as a self-supervised learning approach, aims to learn a representation space of high-dimensional data. Take the uni-modal contrastive learning of images as an example. Two views of the same image are generated to form a ``positive pair" via data augmentations, one of which serves as the ``anchor" and the other serves as the "positive". Meanwhile, to avoid representations collapsing to a single point, ``negative" samples, which are views of different images, are included to form ``negative pairs" with the anchor.

The success of contrastive learning has been connected to maximizing a lower bound of mutual information between the observation $X$ and the representation $Z$, which is lower bounded by $I(Z;Z^1)$ based on the data processing inequality, where $Z$ and $Z^1$ are latent representations of two views of the same observation $X$. Since mutual information is the KL divergence between the joint distribution and the product of marginal distributions, as proposed by \citet{oord2018representation}, we can maximize a lower bound of mutual information by optimizing the InfoNCE objective:
$$L_{\text{InfoNCE}} = \mathbb{E}_{p(z,z^1) p(z^{2:K})}\left[- \log \frac{e^{f(z, z^1)}}{\frac{1}{K}\sum_{i=1}^K e^{f(z, z^i)}} \right],$$
where $\{z^1\}_{i=2}^K$ are $K$ negative samples sampled i.i.d. from the marginal distribution $p(Z)$, i.e. latent representation of random images different from the anchor. Cosine similarity between $z$ and $z^i$ is usually used for the critic function $f$, projecting all the data observations into the representation space of a unit hypersphere.

In the case of bi-modal contrastive learning, the InfoNCE objective is generalized to two analogous terms, with one modality serving as the anchor and the other modality serving as the positive and the negatives. For instance, CLIP~\citep{radford2021learning} learns representations of paired texts $T_1$ and images $I_1$, denoted as $Z_T^1$ and $Z_I^1$ using the following loss:
$$\frac{1}{2} \left[ \mathbb{E}_{p(z_T^1,z_I^1) p(z_I^{2:K})}\left[- \log \frac{e^{f(z_T^1, z_I^1)}}{\frac{1}{K}\sum_{i=1}^K e^{f(z_T^1, z_I^i)}}\right] + \mathbb{E}_{p(z_T^1,z_I^1) p(z_T^{2:K})}\left[-\log \frac{e^{f(z_T^1, z_I^1)}}{\frac{1}{K}\sum_{i=1}^K e^{f(z_T^i, z_I^1)}} \right]\right]$$

In the presence of a sensitive attribute $X_b$, CCL by \citet{ma2021conditional} was proposed using the conditional contrastive learning objective tolower bound the conditional mutual information and reduce the information of the sensitive attribute from the representations by taking $X_b$ as the conditional variable. This leads to an objective resembling $L_{\text{InfoNCE}}$, but with expectation over the conditional distributions:
$$L_{\text{CCL}} = \mathbb{E}_{p(z,z^1) p(z^{2:K}|x_b)}\left[- \log \frac{e^{f(z, z^1)}}{\frac{1}{K}\sum_{i=1}^K e^{f(z, z^i)}} \right],$$
where $x_b$ is the value of the sensitive attribute corresponding to the anchor-positive pair $(z,z^1)$. In $L_{\text{CCL}}$, negative samples $\{z^i\}_{i=2}^K$ are sampled i.i.d. from the condition marginal distribution $p(Z|X_b=x_b)$. Similarly, this can be extended to the bi-modal case as follows:
{\small$$\frac{1}{2} \left[ \mathbb{E}_{p(z_T^1,z_I^1) p(z_I^{2:K}|x_b^1)}\left[- \log \frac{e^{f(z_T^1, z_I^1)}}{\frac{1}{K}\sum_{i=1}^K e^{f(z_T^1, z_I^i)}}\right] + \mathbb{E}_{p(z_T^1,z_I^1) p(z_T^{2:K}|x_b^1)}\left[-\log \frac{e^{f(z_T^1, z_I^1)}}{\frac{1}{K}\sum_{i=1}^K e^{f(z_T^i, z_I^1)}} \right]\right].$$}

\section{Adaptation of Multimodal Contrastive Learning to Drug Screening Data}
\label{app.adapt}
Most multimodal contrastive learning methods are developed in the computer vision domain. We introduce several adaptions to enable application to drug screening data. \rebuttal{We apply these modifications to all the benchmark models used in our experiments.}

\textbf{Data Augmentation for Drug Screening.} Data augmentations often play a vital role in contrastive learning ~\citep{tian2020contrastive, chen2020simple, he2020momentum}. However, data augmentation methods are missing for drug screening data with few replicates for each perturbation condition. Often, the experiment of applying a drug on a given cell line with a certain dosage and perturbation time is repeated only several times (typically 3-5 times). Based on the setup of drug screening experiments, we propose three kinds of data augmentations, which we find to effectively improve representation quality in practice:
\begin{itemize}[itemsep=0.6pt,topsep=1pt, leftmargin=0.5cm]
    \item[1)] \textit{Adding Gaussian Noise.} Gaussian noise is added to $X_g$ before it is input into the encoder. The level of noise is controlled by a hyperparameter $\alpha_{\text{noise}}$. We set $\alpha_{\text{noise}}=0.5$ for GE and $\alpha_{\text{noise}}=0$ for CP in our experiments.
    \item[2)] \textit{Dirichlet Mixup.} Mixup~\citep{zhang2017mixup} generates a new data point by the weighted sum of existing samples. Given replicates of each experimental condition, we use mixup among the replicates; this helps filling in the support in high-dimensional data. To achieve broader coverage, we utilize Dir-mixup~\citep{shu2021open} to account for the multiple-replicates scenario. \rebuttal{More precisely, we generated augmented samples according to the weighted average of the 3-5 replicates of each experimental condition (drug, cell line, dosage, etc.),} and we samples the mixup weights $\mathbf{w}$  from a symmetric Dirichlet distribution with hyperparameter $\alpha_{\text{dir}}$: $\mathbf{w} \sim \text{Dirichlet}(\alpha_{\text{dir}})$. We set $\alpha_{\text{dir}}=0.6$ for GE and $\alpha_{\text{dir}}=0.8$ for CP in our experiments.
    \item[3)] \textit{Dropout (Masking).} Random dropout is conducted on the input data, which corresponds to masking expression level of certain genes, or values of certain features. Dropout proportions $\alpha_{\text{drop}}$ are set to 0.1 for both datasets.
\end{itemize}

\textbf{To Accommodate for Multiple Cell Lines.} In GE, each drug is screened on multiple cell lines, and drug effect varies across different cell lines. To learn a universal drug representation that gathers information from all cell lines, \rebuttal{we make two modifications to existing methods~\citep{jang2021predicting} that directly use the average across cell lines or ignore the cell line labels.}

\rebuttal{Firstly, }after applying a molecular structure encoder that takes molecular structure as input to generate drug embeddings, we utilize the cell line specific linear projection heads to map the universal embeddings into a context-aware latent space corresponding to the phenotype's cell line context. \rebuttal{To be more precise, let $X_g^c$ denote the drug screening profile in cell line $c$ and $Z_g^c=\text{Enc}_g(X_g^c)$ the corresponding representation. Then the cell line contexualized drug representation $Z_d^c$ is calculated via $Z_d^c = \text{proj}_c(\text{Enc}_d(X_d))$ and both representations are optimized through the objective in Equation \ref{eq.loss}.}

\rebuttal{Secondly, }since drug screening data is typically dominated by cell line effect, \rebuttal{if we randomly sample data in the training batch, cell line related features will distract the model from learning drug-specific information. Therefore, we propose to sample training batches from experiments on the same cell line, so that the model can focus more on the difference in terms of drug structure.} \rebuttal{In practice, to achieve a good balance,} we use training batches that iterate between a normal randomly sampled batch and a batch with all samples from the same cell line.

Additionally, we use the MoCo~\citep{he2020momentum} framework for \name \ and CLIP and adapt it to our multimodal scenario with multiple cell lines. Two main adaptations are as follows: 1) maintaining the queue of momentum encoder outputs separately for each modality, and similarly the queue of batch distributions output by the momentum classifiers in \name; 2) constructing cell line specific momentum queues for each of the nine cell lines, and extracting negative samples from cell line specific queues when the training batch has all samples from the same cell line (as discussed above). For CCL, in drug representation experiments, since the MoCo framework requires individual queues for each conditioning variable, i.e. each batch number, and the total batch number is large (97 for CP and about 1000 for GE), it is intractable in practice. We use the SimCLR~\citep{chen2020simple} framework to overcome this. \arxiv{For better comparison, we also report the performance of CLIP and \name \ using the SimCLR framework in Appendix \ref{app.results}.} In the representation fairness experiments, since there are only 4 subgroups, we construct queues for each subgroup and use the MoCo framework directly.

\rebuttal{\textbf{Ablation Study.} To showcase the effectiveness of these designs, we conducted an ablation study by removing each component from the CLIP model and calculating the retrieving accuracy in the GE dataset. The components include data augmentation (\textit{aug}), the momentum contrastive learning framework (\textit{MoCo}), the cell line specific projection heads (\textit{cellproj}), and training batches iterating between random batch and cell line specific batch (\textit{train bycell}). The results are shown in Table \ref{table.ablation}. For comparison, we also report the performance of the vanilla CLIP model with none of these adaptations. Note that all the results of CLIP and CCL reported in Table \ref{table.acc} and Table \ref{table.ft} are based on the modified model instead of the vanilla model, thereby providing a fair comparison reflecting the effectiveness of our \name \ algorithm to removing batch-related biases for multimodal molecular representation learning.}

\begin{table}[!h]
    \centering
    \caption{\rebuttal{Retrieving accuracy of different methods for drug screens based on gene expression data.}}
    \label{table.ablation}
    \small
    \setlength{\tabcolsep}{4pt} 
    \begin{adjustbox}{max width=\textwidth}
        {\color{\rebuttalcolor}\begin{tabular}{l cccc cccc}
            \toprule
            Retrieval Library & \multicolumn{3}{c}{\textit{whole}} & \multicolumn{3}{c}{\textit{batch}} \\
            \cmidrule(lr){2-4} \cmidrule(lr){5-7} 
            Top N Acc (\%) & N=1 & N=5 & N=10 & N=1 & N=5 & N=10 \\
            \midrule
            \name &\textbf{6.48}  &\textbf{19.13}  &\textbf{27.53}  &\textbf{14.16}  &\textbf{33.93}  & \textbf{47.15}  \\
            CLIP & 6.01 & 18.64& 27.16&12.40   & 30.47 & 42.77  \\
            CLIP (w/o \textit{MoCo}) & \arxiv{5.90} & \arxiv{18.06} & \arxiv{26.56} & \arxiv{11.87} & \arxiv{29.41} & \arxiv{42.16}  \\
            CLIP (w/o \textit{aug}) &5.25  &16.00  & 23.99 & 12.25 & 29.74 & 41.88  \\
            CLIP (w/o \textit{cellproj}) & 5.39 & 17.36 & 25.77 & 11.62 & 29.58 & 41.78  \\
            CLIP (w/o \textit{train bycell}) & 5.02 & 16.45 & 24.90 & 10.74 & 28.16 & 40.66  \\
            vanilla CLIP & 4.64 & 14.86 & 22.16 & 11.32 & 28.46 & 40.55  \\
            \bottomrule
        \end{tabular}}
    \end{adjustbox}
\end{table}

\section{Fairness Criteria}
\label{app.fairmetric}
Following the approach by \citet{ma2021conditional}, we use three different fairness criteria: equalized odds (EO), equality of opportunity (EOPP), and demographic parity (DP). Denote $l$ as the label and $\hat{l}$ as the model prediction, which are both binary variables. Denote $Z$ as the group identifier of sensitive attributes (also binary, considering the case of one binary value as the sensitive attribute). The definitions are provided in the following list:
\begin{itemize}[itemsep=0.6pt,topsep=1pt, leftmargin=0.5cm]
\item[1)] Equalized odds (EO): EO calculates the sum of the difference (in absolute value) of the true positive rate and the false positive rate of the model predictions between two groups:
{\small$$\text{EO} = |\mathbb{P}(\hat{l}=1|Z=0,l=1) - \mathbb{P}(\hat{l}=1|Z=1,l=1)| + |\mathbb{P}(\hat{l}=1|Z=0,l=0) - \mathbb{P}(\hat{l}=1|Z=1,l=0)|.$$}
\item[2)] Equality of opportunity (EOPP): EOPP calculates the difference (in absolute value) of the true positive rate of the model prediction between the two groups:
{\small$$\text{EOPP} = |\mathbb{P}(\hat{l}=1|Z=0,l=1) - \mathbb{P}(\hat{l}=1|Z=1,l=1)|.$$}
\item[3)] Demographic parity (DP): DP calculates the difference (in absolute value) in model predictions between two groups:
{\small$$\text{DP} = |\mathbb{P}(\hat{l}=1|Z=0) -\mathbb{P}(\hat{l}=1|Z=1)|.$$}
\end{itemize}

These criteria can be calculated based on the above definitions for the case with a single binary sensitive attribute. We further adapt them to our setting where multiple subgroups exist and thus differences can be calculated for each pair of subgroups. Following the approach in \citet{zhang2022fairness}, we calculate EO, EOPP, and DP for each subgroup pair, and then aggregate the pairwise values by taking the average. We observed in our experiments that using  the maximum or median leads to similar results.

\section{Details of Experimental Setup}
\label{app.exp}
\textbf{Simulation Study.} We generate the simulation input data according to the graphical model in Figure~\ref{fig:model}. To be more specific, we randomly assign a real effect identifier (1-5) and a batch effect identifier (1-25) to each of the 1250 samples. Each real effect category and batch effect category is represented by a 10-dimensional vector randomly sampled from the multivariate standard normal distribution. To introduce noise to the data, we also generate a 10-dimensional random Gaussian vector for each sample. Concatenating the three vectors into one 30-dimensional vector, and inputting it into two different random 2-layer Multi-Layer Perceptrons (MLP) corresponding to the two modalities, we obtain the simulated observations as outputs of the neural network. The simulated dataset is then split, with half utilized for training and the remaining half reserved for visualization in the 2-dimensional representation space. We use the same encoder architecture, i.e.~a 3-layer MLP with hidden dimension size 128, for all the different models. For \name, the weighting hyperparameter $\alpha$ is set to be 0.09 and the gradient adjustment hyperparameter $\lambda$ is set to be 0.1.

\textbf{Representation Learning of Small Molecules.} 
We use L1000 gene expression profiles~\citep{subramanian2017next} (GE) and cell imaging profiles obtained from the Cell Painting assay~\citep{bray2017dataset} (CP) as pretraining datasets. 
In GE, 19,811 small molecules were screened across 77 cancer cell lines, and the drug effect was measured using L1000 profiles (i.e., the expression of 978 landmark genes was measured), with most data coming from 9 cell lines. 
\rebuttal{We use the data from these nine cell lines. Since for most drugs only one perturbation dosage and perturbation time is available per cell line, for simplicity, we drop the few samples with multiple dosages or perturbation times. This results in 17,753 drugs and 82,914 drug-cell line pairs.}
We conduct standard batch correction as a preprocessing step by normalizing the gene expression vectors by the mean over the control groups in the same batch and then use this as model input.
In CP, 30,204 small molecules are screened in one cell line (U2OS). We use the hand-crafted image features obtained by the popular CellProfiler method \citep{mcquin2018cellprofiler}, which gives rise to 701-dimensional tabular features after dropping missing values. \rebuttal{The chemical structures are featurized using Mol2vec~\citep{jaeger2018mol2vec}, which leads to 300-dimensional vector features.} For both datasets and all the models, we use 3-layer MLPs as both the gene expression / cell imaging encoder and the molecular structure encoder, \rebuttal{and we use a 256-dimensional representation embedding}. For \name, we use 2-layer MLPs as the classifiers and set the hyperparameter $\alpha$ to be 0.33 for GE and 0.83 for CP, and $\lambda$ to be 0 for both datasets.

\rebuttal{For the molecule-phenotype retrieval task, we calculate the representation embedding for each data sample in the validation set. Additionally, we calculate the drug structure representation of each molecule in the retrieval library (\textit{whole} or \textit{batch}, as discussed in Section \ref{sec.exp.drug}). Then, for each data sample in the validation set, we rank the drugs in the drug library according to the cosine similarity between the drug structure embedding and the embedding of the data sample. Top N accuracy is then calculated based on the rank of the ground truth drug structure. In GE, the retrieval accuracy is calculated within each cell line, and we report the average over all cell lines. In CP, it is calculated on the U2OS cell line.}

\rebuttal{The retrieval library \textit{whole} contains all molecules in the held-out set; so the accuracy among \textit{whole} captures the model’s general ability to pair drugs with their effects. The retrieval library \textit{batch} contains held-out set molecules that have the same batch identifier as the retrieving target drug; so the accuracy among \textit{batch} reflects the model’s ability to distinguish drugs when the spurious features of the batch confounder are not available. 
Consider the extreme case: if a model only learns batch-related features, it can still have good accuracy in \textit{whole} by randomly selecting drug candidates in the correct batch, but its performance in \textit{batch} will be poor. Therefore, accuracies over both libraries as a whole reflect the model’s ability of molecule-phenotype retrieval for drug discovery and drug repurposing.}

In the downstream property prediction task, we follow the standard scaffold splitting procedure as suggested by~\citet{hu2020open} for all classification tasks, and conduct scaffold splitting for the regression task of the PRISM dataset. For all pretrained models by different methods, we conduct a hyperparameter grid search for the learning rate and training epochs of the finetuning stage, select the ones with the best performance in the validation set (AUC for classification and R2 for regression), and report the mean and standard deviation on the test set over 3 random seeds.

\textbf{Representation Fairness.}
For this, we use three fairness datasets with tabular features for binary classification: UCI Adult, Law School, and Compas. Race and gender features are binarized and combined to form the protected attributes, separating the data into four subgroups. Considering that minority groups often encounter data limitation issues, we subsample the training data to mimic an unbalanced population. To be specific, the training data consists of samples from one of the groups (white, male), (white, female), (black, male), (black, female) at the proportion of 10:2:2:1.
At the pretraining stage, Gaussian noise is added as data augmentation for the tabular features. \arxiv{We use the MoCo framework for all three models.} At the evaluation stage, we follow the data splitting procedure by~\citet{lahoti2020fairness} and the approach taken by~\citet{ma2021conditional} to freeze the encoder and train a small additional network (2-layer MLP) for label classification. Various fairness criteria are then measured based on the model predictions, with mean and standard deviation reported across 5 seeds.

\section{Additional Results.}
\label{app.results}
\textbf{Simulation Study.} Analogously to Figure~\ref{fig.simu}, the latent representations $Z_d$ for the molecular structure \rebuttal{and the corresponding quantitative metrics} are shown in Figure~\ref{fig.simu2}. 
\rebuttal{Similar to \citet{xu2021probabilistic}, we calculated three quantitative metrics to evaluate the learned representations, including prediction accuracy, KNN purity, and entropy of batch mixing.}

\begin{figure}
    \centering
\includegraphics[width=0.98\textwidth]{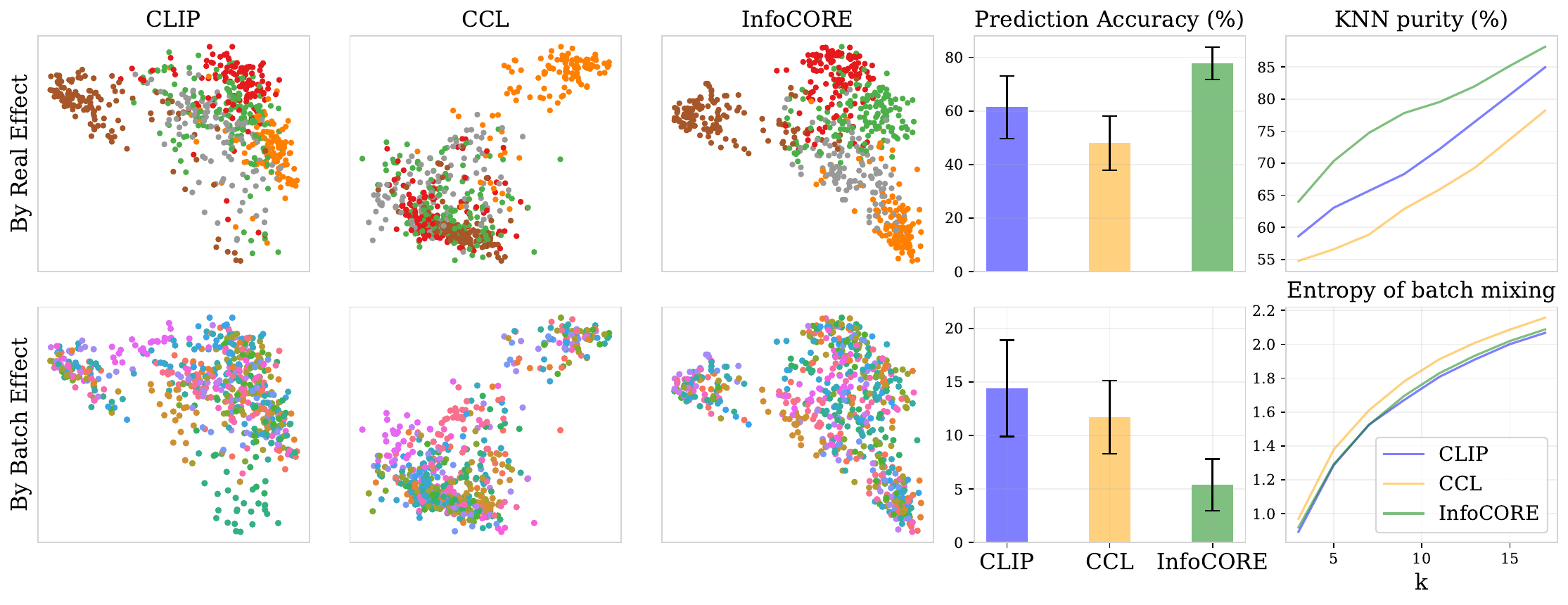}
\caption{\rebuttal{Visualization and quantitative metrics of the drug structure representations learned by different methods in the simulation experiment. The top row visualizations are colored by real effect category and the accuracy is calculated for the prediction of real effect category, while the bottom row uses batch identifier for both coloring and accuracy calculation. The farthest right column shows the KNN purity and entropy of batch mixing for different values of k.}}
    \label{fig.simu2}
\end{figure}

\rebuttal{To calculate prediction accuracy, linear classifiers are trained to predict either the real effect category or the batch identifier based on the latent representation of drug screens and drug structures. Average accuracy and standard deviation are calculated based on 5-fold cross-validation. KNN purity calculates the average ratio of the intersection of the K-nearest neighbors based on the original data and the learned representation in each batch. It measures how well the representation retains the original structure of input data in each batch. Entropy of batch mixing calculates the average entropy of the empirical batch frequencies for the K-nearest neighbors of each drug. It measures whether drugs from different batches are well mixed in the representation space.}

\begin{table}[!h]
    \centering
    \caption{\rebuttal{Prediction accuracy (\%) of the representation to real effect category and batch identifier.}}
    \label{table.simuacc}
    \small
    \setlength{\tabcolsep}{4pt}
    \renewcommand{\arraystretch}{1.1}
    \begin{adjustbox}{max width=\textwidth}
        {\color{\rebuttalcolor}\begin{tabular}{l c c c c}
            \toprule
            Representation & \multicolumn{2}{c}{Drug Screens} & \multicolumn{2}{c}{Drug Structure} \\
            \cmidrule(lr){2-3} \cmidrule(lr){4-5} 
            
            Target & \multicolumn{1}{c}{Real Effect} & \multicolumn{1}{c}{Batch Identifier} & \multicolumn{1}{c}{Real Effect} & \multicolumn{1}{c}{Batch Identifier} \\
            \midrule
            CLIP & 44.2(7.0)& 8.6(2.6) & 61.4(11.7) & 14.4(4.5) \\
            CCL &45.1(4.6) & 7.8(1.9) & 48.0(10.2) & 11.7(3.4) \\
            \name &\textbf{73.3(5.0)} &\textbf{5.6(3.6)} & \textbf{77.8(6.0) \ }& \textbf{\ 5.4(2.4)}\\
            \bottomrule
        \end{tabular}}
    \end{adjustbox}
\end{table}

The same results apply for the simulated molecular structure representations. \rebuttal{In the visualization,} CLIP representations cannot fully mitigate batch effect, CCL representations fail to distinguish real effect categories, while \name \ is able to recover the 5 real effect categories well. 
\rebuttal{In terms of quantitative evaluation, \name\ outperforms CLIP and CCL in all metrics, especially in prediction accuracy of real effect category and KNN purity. The only exception occurs in the entropy of batch mixing for drug structure representation, where CCL has slightly higher entropy than CLIP and \name. However, the KNN purity of CCL is much worse. Considering the trade-off of these two metrics, CCL learns a better batch-independent representation, but fails to preserve the original data structure (i.e. real effect).} The numerical results of prediction accuracy are shown in Table \ref{table.simuacc}. 

\rebuttal{
\textbf{Drug Representations.} We calculate the standard deviations of the retrieval accuracy for the molecule-phenotype retrieval task based on 3 random seeds. \arxiv{As discussed in Appendix \ref{app.adapt}, for better comparison with CCL using the SimCLR framework, we also run CLIP and \name \ with the SimCLR framework.} \arxiv{The results of retrieval accuracy for all models are reported in Table \ref{table.accstd} and the results of property prediction are reported in Table \ref{table.ftapp}.}}

\begin{table}[!h]
    \centering
    \caption{\rebuttal{Retrieving accuracy of different methods \arxiv{using either the MoCo or the SimCLR framework} for gene expression and cell imaging screens and standard deviations over 3 random seeds.}}
    \label{table.accstd}
    \small
    \setlength{\tabcolsep}{4pt} 
    \begin{adjustbox}{max width=\textwidth}
        {\color{\rebuttalcolor}\begin{tabular}{l cccc cccc}
            \toprule
            Dataset & \multicolumn{6}{c}{Gene Expression (GE)} \\
            \cmidrule(lr){2-7} 
            Retrieval Library & \multicolumn{3}{c}{\textit{whole}} & \multicolumn{3}{c}{\textit{batch}} \\
            \cmidrule(lr){2-4} \cmidrule(lr){5-7} 
            Top N Acc (\%) & N=1 & N=5 & N=10 & N=1 & N=5 & N=10 \\
            \midrule
            Random & 0.03 & 0.13 & 0.27 & 1.58 & 7.90 & 15.81  \\
            CLIP \arxiv{(MoCo)}  & 5.96(0.08) & 18.59(0.07) & 27.17(0.02) & 12.23(0.17) & 30.29(0.16) & 42.63(0.17) \\
            \arxiv{CLIP (SimCLR)} & \arxiv{5.81(0.09)} & \arxiv{18.26(0.18)} & \arxiv{26.65(0.08)} & \arxiv{11.97(0.13)} & \arxiv{29.49(0.07)} & \arxiv{41.87(0.26)} \\
            CCL \arxiv{(SimCLR)} & \arxiv{1.93(0.10)} & \arxiv{5.85(0.18)} & \arxiv{8.37(0.04)} & \arxiv{12.76(0.29)} & \arxiv{32.39(0.16)} & \arxiv{45.77(0.08)} \\
            \name \arxiv{\ (MoCo)} & \textbf{6.39(0.16)} & \textbf{18.99(0.19)} & \textbf{27.18(0.30)} & \textbf{14.03(0.33)} & \textbf{33.63(0.27)} & \textbf{46.78(0.38)} \\
            \arxiv{\name \ (SimCLR)} & \arxiv{6.34(0.24)} & \arxiv{18.80(0.33)} & \arxiv{27.05(0.23)} & \arxiv{13.88(0.18)} & \arxiv{33.25(0.24)} & \arxiv{46.38(0.20)}\\
            \bottomrule
        \end{tabular}}
    \end{adjustbox}
    \begin{adjustbox}{max width=\textwidth}
        {\color{\rebuttalcolor}\begin{tabular}{l cccc cccc}
            \toprule
            Dataset & \multicolumn{6}{c}{Cell Painting (CP)} \\
            \cmidrule(lr){2-7} 
            Retrieval Library & \multicolumn{3}{c}{\textit{whole}} & \multicolumn{3}{c}{\textit{batch}} \\
            \cmidrule(lr){2-4} \cmidrule(lr){5-7} 
            Top N Acc (\%) & N=1 & N=5 & N=10 & N=1 & N=5 & N=10 \\
            \midrule
            Random & 0.02 & 0.08 & 0.17 & 1.59 & 7.97 & 15.94  \\
            CLIP \arxiv{(MoCo)} & 7.23(0.05) & 20.95(0.28) & 28.89(0.34) & 13.20(0.10) & 37.78(0.36) & 52.72(0.16) \\
            \arxiv{CLIP (SimCLR)} & \arxiv{6.99(0.16)} & \arxiv{21.09(0.24)} & \arxiv{29.16(0.29)} & \arxiv{12.96(0.28)} & \arxiv{37.58(0.59)} & \arxiv{52.87(0.31)}\\
            CCL \arxiv{(SimCLR)} & 1.31(0.04) & 4.93(0.16) & 7.38(0.34) & 13.20(0.23) & 37.99(0.12) & 53.13(0.30) \\
            \name \arxiv{\ (MoCo)} & 6.93(0.26) & 20.65(0.25) & 28.22(0.14) & \textbf{13.26(0.10)} & \textbf{38.50(0.35)} & \textbf{53.13(0.08)} \\
            \arxiv{\name \ (SimCLR)} & \arxiv{\textbf{7.29(0.29)}} & \arxiv{\textbf{21.15(0.33)}} & \arxiv{\textbf{29.16(0.30)}} & \arxiv{13.16(0.12)} & \arxiv{38.24(0.36)} & \arxiv{53.00(0.25)}\\
            \bottomrule
        \end{tabular}}
    \end{adjustbox}
\end{table}

\begin{table}[!h]
    \centering
    \caption{\arxiv{Performance of different methods using either the MoCo or the SimCLR framework on the molecular property prediction task.}}
    \label{table.ftapp}
    \small
    \setlength{\tabcolsep}{4pt}
    \renewcommand{\arraystretch}{1.2}
    \begin{adjustbox}{max width=\textwidth}
        \begin{tabular}{l| l cccccccc c}
            \toprule
            \multicolumn{2}{c}{}  & \multicolumn{8}{c}{Classification (ROC-AUC \%) $\uparrow$} & \multicolumn{1}{c}{Reg ($\text{R}^2$ \%) $\uparrow$} \\
            \cmidrule(lr){3-10} \cmidrule(lr){11-11}
            \multicolumn{2}{c}{Datasets} & BBBP & BACE & ClinTox & Tox21 & ToxCast & SIDER & HIV & Avg. & PRISM \\
            \midrule
            & \# Molecules  & 2039 & 1513 & 1478 & 7831 & 8575 & 1427 & 41127 & - & 3172 \\
            & \# Tasks  & 1 & 1 & 2 & 12 & 617 & 27 & 1 & - & 5 \\
            \midrule
            & Mol2vec & 70.7(0.4) & 82.9(0.7) & 84.9(0.3) & 76.0(0.1) & 74.4(0.5) & 64.9(0.3) & 77.7(0.1) & 75.9 & 8.5(0.7) \\
            \midrule
            \multirow{5}{*}{GE} & CLIP \arxiv{(MoCo)} & 73.5(0.4) & 86.1(0.4) & 89.6(2.1) & 77.3(0.0) & 75.7(0.6) & 63.7(0.6) & 77.7(0.6) & 77.6 & 13.9(0.4) \\
            & \arxiv{CLIP (SimCLR)} & \arxiv{73.2(0.8)} & \arxiv{85.8(0.5)} & \arxiv{88.5(2.7)} & \arxiv{77.2(0.2)} & \arxiv{76.0(0.1)} & \arxiv{64.6(0.1)} & \arxiv{78.9(0.7)} & \arxiv{77.7} & \arxiv{15.7(0.5)}\\
            & CCL \arxiv{(SimCLR)} & \arxiv{73.0(0.8)} & \arxiv{85.9(0.6)} & \arxiv{90.5(1.0)} & \arxiv{77.0(0.2)} & \arxiv{75.8(0.2)} & \arxiv{63.4(0.5)} & \arxiv{77.5(0.9)} & \arxiv{77.6} & \arxiv{\textbf{16.0(0.5)}} \\
            & \name\arxiv{\ (MoCo)} & 73.5(0.3) & \textbf{86.6(0.3)} & \textbf{91.9(1.9)} & \textbf{77.4(0.4)} & 75.7(0.2) & 64.8(0.6) & 78.5(0.2) & 78.3 & 14.8(0.1) \\
            & \arxiv{\name \ (SimCLR)} & \arxiv{\textbf{73.6(0.3)}} & \arxiv{85.8(0.8)} & \arxiv{91.4(2.1)} & \arxiv{77.3(0.3)} & \arxiv{\textbf{76.1(0.2)}} & \arxiv{\textbf{65.3(0.6)}} & \arxiv{\textbf{79.1(0.2)}} & \arxiv{\textbf{78.4}} & \arxiv{14.9(0.1)}\\
            \midrule
             \multirow{5}{*}{CP}& CLIP \arxiv{(MoCo)} & 73.4(0.8) & \textbf{85.2(0.4)} & 87.3(0.1) & 76.4(0.1) & 76.7(0.1) & 64.8(0.6) & 78.2(0.4) & 77.4 & 16.2(0.2) \\
            & \arxiv{CLIP (SimCLR)} & \arxiv{74.0(0.6)} & \arxiv{84.2(0.4)} & \arxiv{88.8(0.4)} & \arxiv{77.0(0.2)} & \arxiv{76.7(0.2)} & \arxiv{64.6(0.8)} & \arxiv{79.1(0.4)} & \arxiv{77.8} & \arxiv{15.6(0.2)}\\
            & CCL \arxiv{(SimCLR)} & \rebuttal{73.7(0.5)} & \rebuttal{84.9(0.9)} & \rebuttal{87.7(1.8)} & \rebuttal{75.9(0.3)} & \rebuttal{75.7(0.4)} & \rebuttal{65.2(0.4)} & \rebuttal{79.3(0.3)} & \rebuttal{77.5} & \rebuttal{14.7(0.3)} \\
            & \name \arxiv{\ (MoCo)} & 74.0(0.8) & 85.0(0.2) & \textbf{89.3(0.5)} & 76.6(0.1) & 76.9(0.1) & 65.2(0.1) & 78.7(0.1) & 78.0 & 16.2(0.3) \\
            & \arxiv{\name \ (SimCLR)} & \arxiv{\textbf{74.3(0.5)}} & \arxiv{84.9(0.2)} & \arxiv{88.7(0.8)} & \arxiv{\textbf{77.1(0.3)}} & \arxiv{\textbf{77.0(0.1)}} & \arxiv{\textbf{65.3(0.3)}} & \arxiv{\textbf{80.0(0.8)}} & \arxiv{\textbf{78.2}} & \arxiv{\textbf{16.3(0.2)}}\\
            \bottomrule
        \end{tabular}
    \end{adjustbox}
\end{table}

\textbf{Representation Fairness.} Results of DP with different methods over 3 datasets are shown in Table~\ref{table.fairdp}. \name \ outperforms CLIP and CCL in terms of DP.

\begin{table}[!h]
    \centering
    \caption{Performance of various methods on representation fairness task, reported by DP (in percentage values).}
    \label{table.fairdp}
    \small
    \setlength{\tabcolsep}{4pt}
    \renewcommand{\arraystretch}{1.1}
    \begin{adjustbox}{max width=0.5\textwidth}
        \begin{tabular}{l c c c}
            \toprule
            Method & \multicolumn{1}{c}{UCI Adult} & \multicolumn{1}{c}{Law School} & \multicolumn{1}{c}{Compas} \\
            \midrule
            CLIP & 13.1(0.4)&18.4(0.9) &9.5(1.1) \\
            CCL &13.2(1.1) &16.7(0.9) &8.9(1.7) \\
            \name &\textbf{12.4(0.5)} &\textbf{15.7(1.8)} & \textbf{8.2(0.8)}\\
            \bottomrule
        \end{tabular}
    \end{adjustbox}
    \vspace{0.5cm}
\end{table}

\section{\rebuttal{Variables Definition}}
\label{app.vardef}

\rebuttal{In this section, We provide a summary of the definitions of all variables used in our paper. Variables used in the single-sample setting in Section \ref{sec.sgbound} are defined in Table \ref{table.vardef1}. They are expanded to the multi-sample setting in Section \ref{sec.mainprop}, as defined in Table \ref{table.vardef2}. The definition of anchor, positive, and negative samples in contrastive learning is reviewed in Appendix \ref{app.infonce}.}

\begin{table}[!h]
\centering
\caption{\rebuttal{Definition of variables in the single-sample setting}}
\label{table.vardef1}
\setlength{\tabcolsep}{4pt}
\renewcommand{\arraystretch}{1.1}
\begin{adjustbox}{max width=\textwidth}
{\color{\rebuttalcolor}\begin{tabular}{l l}
\toprule
Variable & Definition \\ \midrule
 $X_d$ &  $X_d \in \mathcal{D} \subseteq \mathbb{R}^{n_d}$ is the molecular structure of a drug.      \\ 
 \midrule
 $X_g$ &  \makecell[l]{$X_g \in \mathcal{G} \subseteq \mathbb{R}^{n_g}$ is drug screens data reflecting the\\ phenotypic change of cells induced by applying the drug \\ (eg. gene expression and cell imaging).}     \\ \midrule
 $X_b$ &  \makecell[l]{$X_b \in \mathcal{B}$ is a categorical variable representing the experimental \\batch identifier of the drug screening experiment.}        \\\midrule 
 $Z_d$ &   \makecell[l]{$Z_d=\text{Enc}_d(X_d;\theta_d)$ is the representation of the drug structure;\\ $\theta_d$ is the drug structure encoder parameter.}      \\\midrule 
  $Z_g$ &  \makecell[l]{$Z_g=\text{Enc}_g(X_g;\theta_g)$ is the representation of the drug screens data; \\$\theta_g$ is the drug screen encoder parameter.}       \\ \bottomrule
\end{tabular}}
\end{adjustbox}
\end{table}

\begin{table}[!h]
\centering
\caption{\rebuttal{Definition of variables in the multi-sample setting}}
\label{table.vardef2}
\setlength{\tabcolsep}{4pt}
\renewcommand{\arraystretch}{1.1}
\begin{adjustbox}{max width=\textwidth}
{\color{\rebuttalcolor}\begin{tabular}{l l}
\toprule
Variable & Definition \\ \midrule
 $Z_d^1$ &  The drug structure representation of the anchor-positive pair.     \\ 
 $Z_g^1$ &  The drug screens representation of the anchor-positive pair.    \\ 
 $X_b^1$ &  The experimental batch identifier of the anchor-positive pair.        \\ 
 $Z_d^i, {\small i \in \{2:K\}}$ &  The drug structure representation of a negative sample.      \\ 
 $Z_g^i, i \in \{2:K\}$ &   The drug screens representation of a negative sample.     \\ \bottomrule
\end{tabular}}
\end{adjustbox}
\end{table}
\end{document}